\def\Rset{\mathbb{R}}
\def\Zset{\mathbb{Z}}
\DeclareMathOperator*{\E}{\rm E}
\DeclareMathOperator*{\argmin}{\rm argmin}
\providecommand{\abs}[1]{\lvert#1\rvert}
\providecommand{\norm}[1]{\lVert#1\rVert}
\newcommand{\ignore}[1]{}
\newcommand{\ipsfig}[2]{\scalebox{#1}{\epsfig{#2}}}
\newcommand{\set}[1]{\left\{ #1 \right\}}
\newcommand{\dotp}[2]{\left\langle #1 , #2 \right\rangle}
\newcommand{\h}{\widehat}
\newcommand{\tl}{\widetilde}
\renewcommand{\t}{\widetilde}
\newcommand{\sbeta}{\hat{\beta}}
\newcommand{\sbetam}{\gamma}
\newcommand{\bZ}{\mathbf Z}
\newcommand{\e}{\epsilon}
\newcommand{\nosmall}{\hspace{-.05in}}
\renewcommand{\phi}{\varphi}
\begin{document}

\title{Stability Bounds for Stationary $\phi$-mixing and $\beta$-mixing Processes}

\author{\name Mehryar Mohri \email mohri@cims.nyu.edu\\
\addr Courant Institute of Mathematical Sciences\\
and Google Research\\
251 Mercer Street\\
New York, NY 10012\\
\AND
\name Afshin Rostamizadeh \email rostami@cs.nyu.edu\\
\addr Department of Computer Science\\
Courant Institute of Mathematical Sciences\\
251 Mercer Street\\
New York, NY 10012\\
}

\maketitle
\begin{abstract}
	Most generalization bounds in learning theory are based on some
	measure of the complexity of the hypothesis class used,
	independently of any algorithm. In contrast, the notion of
	algorithmic stability can be used to derive tight generalization
	bounds that are tailored to specific learning algorithms by
	exploiting their particular properties. However, as in much of
	learning theory, existing stability analyses and bounds apply only
	in the scenario where the samples are independently and identically
	distributed. In many machine learning applications, however, this
	assumption does not hold. The observations received by the learning
	algorithm often have some inherent temporal dependence.

  This paper studies the scenario where the observations are drawn
  from a stationary $\varphi$-mixing or $\beta$-mixing sequence, a
  widely adopted assumption in the study of non-i.i.d.\ processes that
  implies a dependence between observations weakening over time.  We
  prove novel and distinct stability-based generalization bounds for
  stationary $\varphi$-mixing and $\beta$-mixing sequences. These
  bounds strictly generalize the bounds given in the i.i.d.\ case and
  apply to all stable learning algorithms, thereby extending the
  use of stability-bounds to non-i.i.d.\ scenarios.

  We also illustrate the application of our $\varphi$-mixing
  generalization bounds to general classes of learning algorithms,
  including Support Vector Regression, Kernel Ridge Regression, and
  Support Vector Machines, and many other kernel regularization-based
  and relative entropy-based regularization algorithms.  These novel
  bounds can thus be viewed as the first theoretical basis for the use
  of these algorithms in non-i.i.d.\ scenarios.\\
\end{abstract}

\ignore{
\begin{abstract}
	Most generalization bounds in learning theory are based on some measure
	of the complexity of the hypothesis class used. In contrast, the notion
	of algorithmic stability can be used to derive tight generalization
	bounds that are tailored to specific learning algorithms. However, as in
	much of learning theory, existing stability analyses and bounds apply
	only in the scenario where the samples are independently and identically
	distributed. In many machine learning applications, however, this
	assumption does not hold.  The observations received by the learning
	algorithm often have some inherent temporal dependence.
	
	This paper studies the scenario where the observations are drawn from a
	stationary $\varphi$-mixing or $\beta$-mixing sequence, which implies a
	dependence between observations weakening over time.  We prove novel and
	distinct stability-based generalization bounds for stationary
	$\varphi$-mixing and $\beta$-mixing sequences. These bounds strictly
	generalize the bounds given in the i.i.d.\ case and apply to all stable
	learning algorithms.
	
	We also illustrate the application of our $\varphi$-mixing
	generalization bounds to general classes of learning algorithms,
	including SVR, KRR, and SVMs, and many other kernel regularization-based
	and relative entropy-based regularization algorithms.  These novel bounds
	can thus be viewed as the first theoretical basis for the use of these
	algorithms in non-i.i.d.\ scenarios.\\
\end{abstract}
}

\begin{keywords}
	Mixing Distributions, Algorithmic Stability, Generalization Bounds,
	Machine Learning Theory
\end{keywords}

\section{Introduction}
\label{sec:introduction}

Most generalization bounds in learning theory are based on some
measure of the complexity of the hypothesis class used, such as the
VC-dimension, covering numbers, or Rademacher complexity. These
measures characterize a class of hypotheses, independently of any
algorithm.  In contrast, the notion of algorithmic stability can be
used to derive bounds that are tailored to specific learning
algorithms and exploit their particular properties. A learning
algorithm is stable if the hypothesis it outputs varies in a limited
way in response to small changes made to the training set. Algorithmic
stability has been used effectively in the past to derive tight
generalization bounds \citep{bousquet,bousquet-jmlr}.

But, as in much of learning theory, existing stability analyses and
bounds apply only in the scenario where the samples are independently
and identically distributed (i.i.d.). In many machine learning
applications, this assumption, however, does not hold; in fact, the
i.i.d.\ assumption is not tested or derived from any data analysis.
The observations received by the learning algorithm often have some
inherent temporal dependence. This is clear in system diagnosis or
time series prediction problems. Clearly, prices of different stocks
on the same day, or of the same stock on different days, may be
dependent. But, a less apparent time dependency may affect data
sampled in many other tasks as well.

This paper studies the scenario where the observations are drawn from
a stationary $\varphi$-mixing or $\beta$-mixing sequence, a widely
adopted assumption in the study of non-i.i.d.\ processes that implies
a dependence between observations weakening over time
\citep{yu,meir,vidyasagar,lozano}.  We prove novel and distinct
stability-based generalization bounds for stationary $\varphi$-mixing
and $\beta$-mixing sequences. These bounds strictly generalize the
bounds given in the i.i.d.\ case and apply to all stable learning
algorithms, thereby extending the usefulness of stability-bounds to
non-i.i.d.\ scenarios.  Our proofs are based on the independent block
technique described by \citet{yu} and attributed to \citet{bernstein},
which is commonly used in such contexts. However, our analysis differs
from previous uses of this technique in that the blocks of points
considered are not of equal size. 

For our analysis of stationary $\varphi$-mixing sequences, we make use
of a generalized version of McDiarmid's inequality \citep{leo} that
holds for $\varphi$-mixing sequences. This leads to stability-based
generalization bounds with the standard exponential form. Our
generalization bounds for stationary $\beta$-mixing sequences cover a
more general non-i.i.d.\ scenario and use the standard McDiarmid's
inequality, however, unlike the $\varphi$-mixing case, the
$\beta$-mixing bound presented here is not a purely exponential bound
and contains an additive term depending on the mixing coefficient.
 
We also illustrate the application of our $\varphi$-mixing
generalization bounds to general classes of learning algorithms,
including Support Vector Regression (SVR) \citep{vapnik98}, Kernel
Ridge Regression \citep{krr}, and Support Vector Machines (SVMs)
\citep{ccvv}. Algorithms such as support vector regression (SVR)
\citep{vapnik98,smbook} have been used in the context of time series
prediction in which the i.i.d.\ assumption does not hold, some with
good experimental results \citep{muller97,mattera}. To our knowledge,
the use of these algorithms in non-i.i.d.\ scenarios has not been
previously supported by any theoretical analysis. The stability bounds
we give for SVR, SVMs, and many other kernel regularization-based and
relative entropy-based regularization algorithms can thus be viewed as
the first theoretical basis for their use in such scenarios.

The following sections are organized as follows. In
Section~\ref{sec:preliminaries}, we introduce the necessary
definitions for the non-i.i.d.\ problems that we are considering and
discuss the learning scenarios in that
context. Section~\ref{sec:non-iid-bounds} gives our main
generalization bounds for stationary $\varphi$-mixing sequences based
on stability, as well as the illustration of its applications to
general kernel regularization-based algorithms, including SVR, KRR,
and SVMs, as well as to relative entropy-based regularization
algorithms.  Finally, Section~\ref{sec:beta-mixing} presents the first
known stability bounds for the more general stationary $\beta$-mixing
scenario.

\section{Preliminaries}
\label{sec:preliminaries}

We first introduce some standard definitions for dependent observations
in mixing theory \citep{doukhan} and then briefly discuss the learning
scenarios in the non-i.i.d.\ case.

\subsection{Non-i.i.d.\ Definitions}
\label{sec:non-iid-definitions}

\begin{definition}
A sequence of random variables $\bZ = \set{Z_t}_{t = -\infty}^\infty$
is said to be \emph{stationary} if for any $t$ and non-negative integers
$m$ and $k$, the random vectors $(Z_t, \ldots, Z_{t + m})$ and $(Z_{t
+ k}, \ldots, Z_{t + m + k})$ have the same distribution.
\end{definition}
Thus, the index $t$ or time, does not affect the distribution of a
variable $Z_t$ in a stationary sequence. This does not imply
independence however. In particular, for $i < j < k$,
$\Pr[Z_j \mid Z_i]$ may not equal  $\Pr[Z_k \mid Z_i]$. The following is a
standard definition giving a measure of the dependence of the random
variables $Z_t$ within a stationary sequence.  There are several
equivalent definitions of this quantity, we are adopting here that of
\citep{yu}.\\

\begin{definition}
Let $\bZ = \set{Z_t}_{t = -\infty}^\infty$ be a stationary sequence of
random variables. For any $i, j \in \Zset \cup \set{-\infty,
+\infty}$, let $\sigma_i^j$ denote the $\sigma$-algebra generated by
the random variables $Z_k$, $i \leq k \leq j$. Then, for any positive
integer $k$, the $\beta$-mixing and $\varphi$-mixing coefficients of
the stochastic process $\bZ$ are defined as
\begin{equation}
\beta(k) = \sup_{n} \E_{B \in \sigma_{-\!\infty}^n} \Bigl[\sup_{A \in
\sigma_{n + k}^\infty} \Bigl|\Pr[A \mid B] - \Pr[A]
\Bigr| \Bigr]
\quad
\varphi(k) = \sup_{\substack{n\\ A \in
\sigma_{n + k}^\infty\\ B \in \sigma_{-\!\infty}^{n}}} \Bigl|\Pr[A \mid B] - \Pr[A]
\Bigr|.
\end{equation}
$\bZ$ is said to be $\beta$-mixing ($\varphi$-mixing) if $\beta(k) \to
0$ (resp. $\varphi(k) \to 0$) as $k \to \infty$. It is said to be
\emph{algebraically $\beta$-mixing} (\emph{algebraically
$\varphi$-mixing}) if there exist real numbers $\beta_0 > 0$
(resp. $\varphi_0 > 0$) and $r > 0$ such that $\beta(k) \leq \beta_0/
k^{r}$ (resp. $\varphi(k) \leq \varphi_0/ k^{r}$) for all $k$,
\emph{exponentially mixing} if there exist real numbers $\beta_0$
(resp. $\varphi_0 > 0$) and $\beta_1$ (resp. $\varphi_1 > 0$) such
that $\beta(k) \leq \beta_0 \exp(-\beta_1 k^r)$ (resp. $\varphi(k) \leq
\varphi_0 \exp(-\varphi_1 k^r)$) for all $k$.
\end{definition}
Both $\beta(k)$ and $\varphi(k)$ measure the dependence of an event on
those that occurred more than $k$ units of time in the
past. $\beta$-mixing is a weaker assumption than $\phi$-mixing and
thus covers a more general non-i.i.d.\ scenario.

This paper gives stability-based generalization bounds both in the
$\varphi$-mixing and $\beta$-mixing case. The $\beta$-mixing bounds
cover a more general case of course, however, the $\varphi$-mixing
bounds are simpler and admit the standard exponential form. The
$\varphi$-mixing bounds are based on a concentration inequality that
applies to $\phi$-mixing processes only. Except from the use of this
concentration bound, all of the intermediate proofs and results to
derive a $\phi$-mixing bound in Section~\ref{sec:non-iid-bounds} are
given in the more general case of $\beta$-mixing sequences.

It has been argued by \citet{vidyasagar} that $\beta$-mixing is ``just
the right'' assumption for the analysis of weakly-dependent sample
points in machine learning, in particular because several PAC-learning
results then carry over to the non-i.i.d.\ case. Our $\beta$-mixing
generalization bounds further contribute to the analysis of this
scenario.\footnote{Some results have also been obtained in the more general
context of $\alpha$-mixing but they seem to require the stronger
condition of exponential mixing \citep{modha}.}

We describe in several instances the application of our bounds in
the case of algebraic mixing. Algebraic mixing is a standard
assumption for mixing coefficients that has been adopted in previous
studies of learning in the presence of dependent observations
\citep{yu,meir,vidyasagar,lozano}.

Let us also point out that mixing assumptions can be checked in some
cases such as with Gaussian or Markov processes \citep{meir} and that
mixing parameters can also be estimated in such cases.

Most previous studies use a technique originally introduced by
\citet{bernstein} based on \emph{independent blocks} of equal size
\citep{yu,meir,lozano}. This technique is particularly relevant when
dealing with stationary $\beta$-mixing. We will need a related but
somewhat different technique since the blocks we consider may not have
the same size. The following lemma is a special case of Corollary 2.7
from \citep{yu}.\\

\begin{lemma}[Yu \citep{yu}, Corollary 2.7]
\label{lemma:blocks}
Let $\mu \geq 1$ and suppose that $h$ is measurable function, with 
absolute value bounded by $M$,
on a product probability space $\left(\prod_{j=1}^\mu \Omega_j,
\prod_{i=1}^\mu \sigma_{r_i}^{s_i}\right)$ where $r_i \leq s_i \leq
r_{i+1}$ for all $i$.  Let $Q$ be a probability measure on the product
space with marginal measures $Q_i$ on ($\Omega_i,
\sigma_{r_i}^{s_i}$), and let $Q^{i+1}$ be the marginal measure of $Q$
on $\left(\prod_{j=1}^{i+1} \Omega_j, \prod_{j=1}^{i+1}
\sigma_{r_j}^{s_j}\right)$, $i = 1, \ldots, \mu - 1$. Let $\beta(Q) =
\sup_{1 \leq i \leq \mu-1} \beta(k_i)$, where $k_i = r_{i+1} - s_{i}$,
and $P = \prod_{i=1}^\mu Q_i$. Then,
\begin{equation}
| \E_Q[h] - \E_P[h] | \leq (\mu - 1) M \beta(Q).
\end{equation}
\end{lemma}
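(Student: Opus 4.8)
The plan is to prove the bound by induction on the number of blocks $\mu$ (equivalently, by telescoping through a family of hybrid measures that interpolate between the true joint law $Q$ and the product of marginals $P$). The base case $\mu = 1$ is immediate, since then $Q = Q_1 = P$ and both sides vanish. For the inductive step I would isolate the last block by introducing the intermediate measure $P' = Q^{\mu-1} \otimes Q_\mu$, under which the first $\mu-1$ blocks retain their joint law $Q^{\mu-1}$ while block $\mu$ is made independent with its correct marginal $Q_\mu$. The target quantity then splits as
\[
\E_Q[h] - \E_P[h] = \bigl(\E_Q[h] - \E_{P'}[h]\bigr) + \bigl(\E_{P'}[h] - \E_P[h]\bigr),
\]
and I would estimate the two terms separately.

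For the second term, observe that $P'$ and $P$ share the same independent last marginal $Q_\mu$ and differ only in the coupling of the first $\mu-1$ blocks ($Q^{\mu-1}$ versus $\prod_{i=1}^{\mu-1} Q_i$). Integrating out $z_\mu$ against $Q_\mu$ and setting $h'(z_1,\dots,z_{\mu-1}) = \int h(z_1,\dots,z_\mu)\, dQ_\mu(z_\mu)$ — which is measurable on $\prod_{j=1}^{\mu-1}\sigma_{r_j}^{s_j}$ and still bounded by $M$ — reduces this term exactly to the $(\mu-1)$-block instance of the lemma applied to $h'$. Since $\beta(Q^{\mu-1}) \leq \beta(Q)$, the induction hypothesis bounds it by $(\mu-2)\,M\,\beta(Q)$.

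The first term is the heart of the argument: it compares the joint law $Q = Q^\mu$ of two ``super-blocks'' — the merged block $1,\dots,\mu-1$, measurable on $\sigma_{-\infty}^{s_{\mu-1}}$, and block $\mu$, measurable on $\sigma_{r_\mu}^\infty$ with $r_\mu = s_{\mu-1} + k_{\mu-1}$ — against the product $Q^{\mu-1} \otimes Q_\mu$ of their marginals. Viewing $h$ as a bounded function of the pair (merged block, block $\mu$), this term equals $\int h\, d(Q - Q^{\mu-1}\otimes Q_\mu)$, and it is here that the mixing coefficient enters. Bounding the integral of a function bounded by $M$ against the signed measure (joint minus product of marginals) by $M$ times the total variation distance between the two, and identifying that distance with the $\beta$-mixing coefficient of the separating gap, namely $\beta(k_{\mu-1}) \leq \beta(Q)$, gives the bound $M\,\beta(Q)$ for this term.

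Adding the two estimates yields $(\mu-2)M\beta(Q) + M\beta(Q) = (\mu-1)M\beta(Q)$, as claimed. The main obstacle is precisely this two-block estimate: one must pass cleanly from the abstract product-space definition of the $\beta$-mixing coefficient — phrased as an expected supremum of $|\Pr[A\mid B] - \Pr[A]|$ over future events $A$ — to a bound on $|\E_{\mathrm{joint}}[h] - \E_{\mathrm{product}}[h]|$ for an \emph{arbitrary} bounded $h$ on the product of the two super-blocks. This requires the standard identification of the coefficient with the total variation distance between the joint law and the product of marginals (using that the two super-blocks lie in the tail $\sigma$-algebras $\sigma_{-\infty}^{s_{\mu-1}}$ and $\sigma_{r_\mu}^\infty$ separated by $k_{\mu-1}$), together with a Fubini/measurability check ensuring that the partial integrals $h'$ and the reduced two-argument form of $h$ remain measurable and bounded by $M$.
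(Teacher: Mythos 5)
The paper offers no proof of this lemma at all --- it is imported verbatim as Corollary 2.7 of \citet{yu} --- so the only meaningful comparison is with the standard argument from the literature, which is precisely the telescoping induction you describe: interpolate through the hybrid measure $Q^{\mu-1}\otimes Q_\mu$, reduce the term involving only the first $\mu-1$ blocks to a smaller instance via the partial integral $h'$, and isolate a single two-block comparison per gap. That skeleton, including the observation that $\beta(Q^{\mu-1})\leq\beta(Q)$ and the measurability and boundedness checks on $h'$, is correct and is essentially how the result is proved in the source.

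The one genuine gap sits exactly where you locate the heart of the argument, and it is a factor of $2$. Under the paper's definition, the coefficient attached to the gap $k_{\mu-1}$ is $\E\bigl[\sup_A|\Pr[A\mid B]-\Pr[A]|\bigr]$, which by disintegration equals $\sup_C |Q(C)-(Q^{\mu-1}\otimes Q_\mu)(C)|$, i.e.\ \emph{half} of the $L^1$ mass $|Q-Q^{\mu-1}\otimes Q_\mu|(\Omega)$. For a function with $|h|\leq M$ the valid inequality is $|\int h\,d\nu|\leq M\,|\nu|(\Omega)$, which therefore yields $2M\beta(k_{\mu-1})$ per gap, not $M\beta(k_{\mu-1})$; your step bounding the integral by ``$M$ times the total variation distance'' and then ``identifying that distance with the $\beta$-mixing coefficient'' uses the two incompatible normalizations of total variation in succession. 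The loss is real rather than an artifact of the method: for two perfectly correlated uniform bits and $h=M(\mathbf{1}_{x=y}-\mathbf{1}_{x\neq y})$ one has $\E_Q[h]-\E_P[h]=M$ while the expected-supremum coefficient equals $1/2$. To land on the stated constant you must either assume $0\leq h\leq M$ (recenter by $M/2$ so that $|\int h\,d\nu|\leq\tfrac{M}{2}|\nu|(\Omega)=M\beta$), which covers many applications but not Lemmas~\ref{lemma:phi-bound} and~\ref{lemma:E-bound}, where the lemma is applied to differences of costs lying in $[-M,M]$, or else read $\beta$ as the full $L^1$ distance. As written, your argument establishes $|\E_Q[h]-\E_P[h]|\leq 2(\mu-1)M\beta(Q)$.
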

The lemma gives a measure of the difference between the distribution
of $\mu$ blocks where the blocks are independent in one case and
dependent in the other case.
The distribution within each block is assumed to be the 
same in both cases. For a monotonically decreasing function $\beta$,
we have $\beta(Q) = \beta(k^*)$, where $k^* = \min_i(k_i)$ is the
smallest gap between blocks.

\subsection{Learning Scenarios}
\label{sec:scenarios}

We consider the familiar supervised learning setting where the
learning algorithm receives a sample of $m$ labeled points $S = (z_1,
\ldots, z_m) = ((x_1, y_1), \ldots, (x_m, y_m)) \in (X \times Y)^m$,
where $X$ is the input space and $Y$ the set of labels ($Y = \Rset$ in
the regression case), both assumed to be measurable. 

For a fixed learning algorithm, we denote by $h_S$ the hypothesis it
returns when trained on the sample $S$. The error of a hypothesis on a
pair $z \in X \times Y$ is measured in terms of a cost function $c: Y
\times Y \to \Rset_+$. Thus, $c(h(x), y)$ measures the error of a
hypothesis $h$ on a pair $(x, y)$, $c(h(x), y) = (h(x) - y)^2$ in the
standard regression cases. We will use the shorthand $c(h, z) :=
c(h(x), y)$ for a hypothesis $h$ and $z = (x, y) \in X \times Y$ and
will assume that $c$ is upper bounded by a constant $M > 0$. We denote
by $\h R(h)$ the empirical error of a hypothesis $h$ for a training
sample $S = (z_1, \ldots, z_m)$:
\begin{equation}
\widehat{R}(h) = \frac{1}{m} \sum_{i=1}^m c(h,z_i).
\end{equation}
In the standard machine learning scenario, the sample pairs $z_1,
\ldots, z_m$ are assumed to be i.i.d., a restrictive assumption that
does not always hold in practice. We will consider here the more
general case of dependent samples drawn from a stationary mixing
sequence $\bZ$ over $X \times Y$. As in the i.i.d.\ case, the
objective of the learning algorithm is to select a hypothesis with
small error over future samples. But, here, we must distinguish two
versions of this problem.

In the most general version, future samples depend on the training
sample $S$ and thus the generalization error or true error of the
hypothesis $h_S$ trained on $S$ must be measured by its expected error
conditioned on the sample $S$:
\begin{equation}
\label{eq:risk}
R(h_S) = \E_z[c(h_S, z) \mid S].
\end{equation}
This is the most realistic setting in this context, which matches time
series prediction problems. A somewhat less realistic version is one
where the samples are dependent, but the test points are assumed to
be independent of the training sample $S$. The generalization error of
the hypothesis $h_S$ trained on $S$ is then:
\begin{equation}
R(h_S) = \E_z[c(h_S, z) \mid S] = \E_z[c(h_S, z)].
\end{equation}
This setting seems less natural since, if samples are dependent,
future test points must also depend on the training points, even if
that dependence is relatively weak due to the time interval after
which test points are drawn. Nevertheless, it is this somewhat less
realistic setting that has been studied by all previous machine
learning studies that we are aware of
\citep{yu,meir,vidyasagar,lozano}, even when examining specifically a
time series prediction problem \citep{meir}. Thus, the bounds derived
in these studies cannot be directly applied to the more general
setting.

We will consider instead the most general setting with the definition
of the generalization error based on Eq.~\ref{eq:risk}. Clearly, our
analysis also applies to the less general setting just discussed as
well.

Let us briefly discuss the more general scenario of
\emph{non-stationary} mixing sequences, that is one where the
distribution may change over time. Within that general case, the
generalization error of a hypothesis $h_S$, defined straightforwardly
by
\begin{equation}
	R(h_S, t) = \E_{z_t \sim \sigma_{t}^t} [c(h_S,z_t) \mid S],
\end{equation}
would depend on the time $t$ and it may be the case that $R(h_S, t)
\neq R(h_S, t')$ for $t \neq t'$, making the definition of the
generalization error a more subtle issue.  To remove the dependence on
time, one could define a weaker notion of the generalization error
based on an expected loss over all time:
\begin{equation}
	R(h_S) = \E_t[R(h_S,t)].
\end{equation}
It is not clear however whether this term could be easily computed and
useful. A stronger condition would be to minimize the generalization
error for any particular target time. Studies of this type have been
conducted for smoothly changing distributions, such as in
\citet{lafferty}, however, to the best of our knowledge, the scenario
of a both non-identical and non-independent sequences has not yet been
studied.

\section{$\phi$-Mixing Generalization Bounds and Applications}
\label{sec:non-iid-bounds}

This section gives generalization bounds for $\sbeta$-stable algorithms
over a mixing stationary distribution.\footnote{The standard variable used
for the stability coefficient is $\beta$. To avoid the confusion with the
$\beta$-mixing coefficient, we will use $\sbeta$ instead.} The first two
sections present our main proofs which hold for $\beta$-mixing stationary
distributions. In the third section, we will briefly discuss concentration
inequalities that apply to $\phi$-mixing processes only. Then, in the final
section, we will present our main results.

The condition of $\sbeta$-stability is an algorithm-dependent property
first introduced by \citet{devroye} and \citet{kearns}. It has been
later used successfully by \citet{bousquet,bousquet-jmlr} to show
algorithm-specific stability bounds for i.i.d.\ samples. Roughly
speaking, a learning algorithm is said to be \emph{stable} if small
changes to the training set do not produce large deviations in its
output. The following gives the precise technical definition.

\begin{definition}
A learning algorithm is said to be (uniformly) \emph{$\sbeta$-stable}
if the hypotheses it returns for any two training samples $S$ and $S'$
that differ by a single point satisfy
\begin{equation}
\forall z \in X \times Y, \quad |c(h_S, z) - c(h_{S'}, z)| \leq \sbeta.
\end{equation}
\end{definition}
The use of stability in conjunction with McDiarmid's inequality will allow
us to produce generalization bounds.  McDiarmid's inequality is an
exponential concentration bound of the type,
\begin{equation*}
	\Pr[ | \Phi - \E[\Phi] | \geq \e ] \leq \exp \left(-\frac{
	\e^2}{m l^2} \right),
\end{equation*}
where the probability is over a sample of size $m$ and $l$ is the
Lipschitz parameter of $\Phi$ (which is also a function of $m$).
Unfortunately, this inequality cannot be easily applied when the
sample points are not distributed in an i.i.d.\ fashion.  We will use
the results of \citet{leo} to extend the use of McDiarmid's inequality
with general mixing distributions (Theorem~\ref{th:leo}).

To obtain a stability-based generalization bound, we will apply this
theorem to $\Phi(S) = R(h_S) - \h R(h_S)$. To do so, we need to show,
as with the standard McDiarmid's inequality, that $\Phi$ is a
Lipschitz function and, to make it useful, bound $\E[\Phi]$.  The next
two sections describe how we achieve both of these in this non-i.i.d.\
scenario.

Let us first take a brief look at the problem faced when attempting to
give stability bounds for dependent sequences and give some idea of
our solution for that problem. The stability proofs given by
\citet{bousquet} assume the i.i.d.\ property, thus replacing an
element in a sequence with another does not affect the expected value
of a random variable defined over that sequence.  In other words, the
following equality holds,
\begin{equation}
	\label{eq:iid_property}
        \E_S[V(Z_1, \ldots,Z_i, \ldots, Z_m)] =
	\E_{S,Z'}[V(Z_1, \ldots,Z', \ldots, Z_m)],
\end{equation} 
for a random variable $V$ that is a function of the sequence of random
variables $S = (Z_1, \ldots, Z_m)$.  However, clearly, if the points in
that sequence $S$ are dependent, this equality may not hold
anymore.

The main technique to cope with this problem is based on the so-called
``independent block sequence'' originally introduced by
\citet{bernstein}. \ignore{This consists of examining, instead of the
original dependent sequence, independent blocks of points, each with
the same distribution as the block of the same size within the
original sequence. }This consists of eliminating from the original
dependent sequence several blocks of contiguous points, leaving us
with some remaining blocks of points.  Instead of these dependent
blocks, we then consider independent blocks of points, each with the
same size and the same distribution (within each block) as the
dependent ones. By Lemma~\ref{lemma:blocks}, for a $\beta$-mixing
distribution, the expected value of a random variable defined over the
dependent blocks is close to the one based on these independent blocks.
Working with these independent blocks brings us back to a situation
similar to the i.i.d.\ case, with i.i.d.\ blocks replacing i.i.d.\
points.

Our use of this method somewhat differs from previous ones
\citep[see][]{yu,meir} where many blocks of equal size are considered.
We will be dealing with four blocks and with typically unequal sizes.
More specifically, note that for Equation~\ref{eq:iid_property} to
hold, we only need that the variable $Z_i$ be independent of the other
points in the sequence.  To achieve this, roughly speaking, we will be
``discarding'' some of the points in the sequence surrounding $Z_i$.
This results in a sequence of three blocks of contiguous points.  If
our algorithm is stable and we do not discard too many points, the
hypothesis returned should not be greatly affected by this operation.
In the next step, we apply the independent block lemma, which then
allows us to assume each of these blocks as independent modulo the
addition of a mixing term. In particular, $Z_i$ becomes independent of
all other points. Clearly, the number of points discarded is subject
to a trade-off: removing too many points could excessively modify the
hypothesis returned; removing too few would maintain the dependency
between $Z_i$ and the remaining points, thereby producing a larger
penalty when applying Lemma~\ref{lemma:blocks}. This trade-off is made
explicit in the following section where an optimal solution is sought.

\subsection{Lipschitz Bound}
\label{sec:lipschitz}

As discussed in Section~\ref{sec:scenarios}, in the most general
scenario, test points depend on the training sample. We first present
a lemma that relates the expected value of the generalization error in
that scenario and the same expectation in the scenario where the test
point is independent of the training sample. We denote by $R(h_S) =
\E_z[c(h_S, z) | S]$ the expectation in the dependent case and by $\t
R(h_{S_b}) = \E_{\t z}[c(h_{S_b}, \t z)]$ the expectation where the
test points are assumed independent of the training, with $S_b$
denoting a sequence similar to $S$ but with the last $b$ points
removed. Figure~\ref{fig:S}(a) illustrates that sequence. The block
$S_b$ is assumed to have exactly the same distribution as the
corresponding block of the same size in $S$.

\begin{lemma}
\label{lemma:risk}
Assume that the learning algorithm is $\sbeta$-stable and that the
cost function $c$ is bounded by $M$. Then, for any sample $S$ of size
$m$ drawn from a $\beta$-mixing stationary distribution and for any $b
\in \{0,\ldots, m\}$, the following holds:
\begin{equation}
|\E_S[R(h_S)] - \E_S[\t R(h_{S_b})]| \leq b \sbeta + \beta(b) M.
\end{equation}
\end{lemma}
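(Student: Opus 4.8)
The plan is to interpolate between $\E_S[R(h_S)]$ and $\E_S[\t R(h_{S_b})]$ through the intermediate quantity $I = \E_{S,z}[c(h_{S_b}, z)]$, in which the hypothesis is already trained on the truncated sample $S_b$ but the test point $z$ is still the dependent one drawn conditionally on the full sample $S$. Writing the target difference as $(\E_S[R(h_S)] - I) + (I - \E_S[\t R(h_{S_b})])$ and applying the triangle inequality isolates the two distinct sources of discrepancy: the change of hypothesis caused by discarding points, and the change of the test-point distribution from dependent to independent.

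For the first term, observe that $\E_S[R(h_S)] - I = \E_S\E_z[\,c(h_S,z) - c(h_{S_b},z)\mid S\,]$. The samples $S$ and $S_b$ differ by the $b$ discarded points, so I would apply the $\sbeta$-stability hypothesis once for each removed point, along a chain of samples each differing from the next by a single point, to obtain the pointwise bound $|c(h_S,z) - c(h_{S_b},z)| \leq b\sbeta$ uniformly in $z$. Taking expectations then gives $|\E_S[R(h_S)] - I| \leq b\sbeta$. This step uses only stability and no probabilistic structure of the sequence.

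For the second term, I would first note that $c(h_{S_b}, z)$ depends on the sample only through $S_b$, so the discarded points marginalise out and $I = \E_{S_b, z}[c(h_{S_b}, z)]$, where $(S_b, z)$ carries the joint marginal of the training block together with the test point. Likewise $\E_S[\t R(h_{S_b})] = \E_{S_b, \t z}[c(h_{S_b}, \t z)]$, the same expression but with $\t z$ drawn independently of $S_b$ from its stationary marginal. These two quantities match the hypotheses of Lemma~\ref{lemma:blocks} with $\mu = 2$ blocks — the training block $S_b$ and the singleton test block — the function $h = c(h_{S_b}, z)$ being measurable and bounded by $M$. Because the last $b$ points of $S$ were removed, the gap separating $S_b$ from $z$ in the original sequence is at least $b$, and since $\beta$ is non-increasing in the gap the lemma yields $|I - \E_S[\t R(h_{S_b})]| \leq (\mu-1) M \beta(b) = M\beta(b)$. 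Combining the two bounds proves the claim.

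The main obstacle is the bookkeeping in the second step: one must verify that, after marginalising the discarded points, the pair $(S_b, z)$ genuinely fits the product-space setting of Lemma~\ref{lemma:blocks} with a single inter-block gap controlled by $\beta(b)$, and that the independent measure $P$ appearing there reproduces exactly the independent-test-point expectation $\E_S[\t R(h_{S_b})]$. The first step is comparatively routine, the only delicate point being that the $\sbeta$-stability hypothesis is phrased for replacing a single point, so it must be invoked $b$ times to account for the full block of removed points.
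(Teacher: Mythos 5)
Your proposal is correct and follows essentially the same route as the paper's proof: it interpolates through the intermediate quantity $\E_{S,z}[c(h_{S_b},z)]$, bounds the hypothesis-change term by $b\sbeta$ via stability, and bounds the test-point-dependence term by $M\beta(b)$ via Lemma~\ref{lemma:blocks} applied to two blocks separated by the $b$ removed points. Your treatment is in fact somewhat more explicit than the paper's about the chain of single-point applications of stability and the marginalization step.
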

\begin{proof}
The $\sbeta$-stability of the learning algorithm implies that
\begin{equation}
\E_S[R(h_S)] = \E_{S,z} [ c(h_S, z) ] \leq \E_{S,z} [ c(h_{S_b}, z) ]
+ b \sbeta.
\end{equation}
The application of Lemma~\ref{lemma:blocks} yields
\begin{equation}
\E_S[R(h_S)] \leq \E_{S,\t z} [ c(h_{S_b}, \t z) ] + b \sbeta +
\beta(b) M = \t \E_S [ R(h_{S_b}) ] + b \sbeta + \beta(b) M.
\end{equation}
The other side of the inequality of the lemma can be shown following
the same steps.
\end{proof}
\ignore{
Notice that $\sbeta$-stability provides an upper bound for replacing a training
point, which is an upper bound for removing a point (**is this obvious, or
should there be more discussion?). Thus, the first inequality is true by
stability of the algorithm that produces $h_S$ and $h_{S_b}$.  The second
inequality is a direct application of Lemma ?, which is introduced in the
next section.
}
We can now prove a Lipschitz bound for the function $\Phi$.
\begin{lemma}
\label{lemma:phi-bound}
Let $S = (z_1, \dots, z_i, \dots, z_m)$ and $S^i = (z_1, \dots, z_i',
\dots, z_m)$ be two sequences drawn from a $\beta$-mixing stationary
process that differ only in point $i \in [1, m]$, and let $h_S$ and
$h_{S^i}$ be the hypotheses returned by a $\sbeta$-stable algorithm
when trained on each of these samples. Then, for any $i \in [1, m]$,
the following inequality holds:
\begin{equation}
|\Phi(S) - \Phi(S^i)| \leq (b + 1) 2 \sbeta + 2 \beta(b) M + \frac{M}{m}.
\end{equation}
\end{lemma}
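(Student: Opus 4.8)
The plan is to split the difference into its generalization and empirical parts and bound each separately:
\begin{equation*}
\Phi(S) - \Phi(S^i) = \bigl(R(h_S) - R(h_{S^i})\bigr) - \bigl(\h R(h_S) - \h R(h_{S^i})\bigr).
\end{equation*}
The empirical part is the routine one. Since $S$ and $S^i$ agree on every coordinate except the $i$-th, I would write $\h R(h_S) - \h R(h_{S^i}) = \frac{1}{m}\sum_{j \neq i}\bigl(c(h_S,z_j) - c(h_{S^i},z_j)\bigr) + \frac{1}{m}\bigl(c(h_S,z_i) - c(h_{S^i},z_i')\bigr)$, apply $\sbeta$-stability to each of the $m-1$ shared summands and the bound $c \leq M$ to the single summand that involves the differing point. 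This yields $|\h R(h_S) - \h R(h_{S^i})| \leq \frac{(m-1)\sbeta + M}{m} \leq \sbeta + \frac{M}{m}$, which already supplies one $\sbeta$ and the $\frac{M}{m}$ term of the claimed bound.

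The generalization part is where the non-i.i.d.\ structure matters and is the heart of the argument. The difficulty is that $R(h_S) = \E_z[c(h_S,z)\mid S]$ is a \emph{conditional} expectation in which the test point depends on the entire training sample, so one cannot pass from $h_S$ to $h_{S^i}$ by a single use of stability. My approach is to route through the removed-block quantities $\t R(h_{S_b})$ and $\t R(h_{S_b^i})$ introduced just before Lemma~\ref{lemma:risk}, where $S_b$ (resp.\ $S_b^i$) denotes $S$ (resp.\ $S^i$) with its last $b$ points discarded: dropping these points opens a gap of width $b$ between the retained training block and the test point, which is exactly the setting of the independent-block lemma. Concretely, I would invoke Lemma~\ref{lemma:risk} to get $|R(h_S) - \t R(h_{S_b})| \leq b\sbeta + \beta(b)M$ and likewise $|R(h_{S^i}) - \t R(h_{S_b^i})| \leq b\sbeta + \beta(b)M$, the $b\sbeta$ coming from removing $b$ points by stability and the $\beta(b)M$ from treating the test point as independent via Lemma~\ref{lemma:blocks}.

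It then remains to compare the two independent-test quantities, which I would do by a short case analysis on whether the differing index falls in the retained or the discarded block. If $i \leq m - b$, then $S_b$ and $S_b^i$ differ in a single point, so $\sbeta$-stability gives $|\t R(h_{S_b}) - \t R(h_{S_b^i})| \leq \sbeta$; if $i > m - b$, then $S_b = S_b^i$ and this difference is $0$. Either way the gap is at most $\sbeta$, so chaining the three bounds gives $|R(h_S) - R(h_{S^i})| \leq (2b+1)\sbeta + 2\beta(b)M$. Adding the empirical estimate $\sbeta + \frac{M}{m}$ then produces exactly $2(b+1)\sbeta + 2\beta(b)M + \frac{M}{m}$.

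The step I expect to be the main obstacle is the application of Lemma~\ref{lemma:risk} in the generalization part. That lemma is stated as an inequality between \emph{expectations} $\E_S[\,\cdot\,]$, and indeed the separation it exploits arises from marginalizing out the discarded middle points, something that is natural under $\E_S$ but not for a fixed $S$. Yet the bounded-differences property fed into the $\phi$-mixing concentration inequality (Theorem~\ref{th:leo}) wants a bound that holds at the level of individual sample perturbations. I would therefore scrutinize whether the $b\sbeta + \beta(b)M$ control is genuinely available pointwise, or whether the argument must instead carry the block-lemma penalty through the perturbation step in a form compatible with how Theorem~\ref{th:leo} consumes it; reconciling the expectation-level block estimate with the per-coordinate Lipschitz requirement is the delicate point on which the whole bound hinges.
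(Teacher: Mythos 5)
Your proposal follows essentially the same route as the paper's own proof: the identical empirical-error decomposition giving $\sbeta + M/m$, and the same treatment of the generalization terms via Lemma~\ref{lemma:risk} applied to both $R(h_S)$ and $R(h_{S^i})$ (contributing $2b\sbeta + 2\beta(b)M$) followed by a single stability bound on $|\t R(h_{S_b}) - \t R(h_{S_b^i})| \leq \sbeta$. The expectation-versus-pointwise subtlety you flag at the end is a fair observation, but it applies equally to the paper's own argument, which also invokes the expectation-level Lemma~\ref{lemma:risk} inside a per-sample perturbation bound, so it is not a gap in your proposal relative to the paper.
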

\begin{proof}
To prove this inequality, we first bound the difference of the
empirical errors as in \citep{bousquet-jmlr}, then the difference of the
true errors. Bounding the difference of costs on agreeing points with
$\sbeta$ and the one that disagrees with $M$ yields
\begin{eqnarray}
|\h R(h_S) - \h R(h_{S^i})| & 
 = & \frac{1}{m} \sum_{j \neq i} |c(h_S,z_j) -
c(h_{S^i},z_j)| + \frac{1}{m} |c(h_S,z_i) - c(h_{S^i},z_i')| 
\label{eq:e1} \\
& \leq & \sbeta
+ \frac{M}{m}.\nonumber
\end{eqnarray}
Since both $R(h_S)$ and $R(h_{S^i})$ are defined with respect to a
(different) dependent point, we apply Lemma~\ref{lemma:risk} to both
generalization error terms and use $\sbeta$-stability. This then
results in
\begin{eqnarray}
|R(h_S) - R(h_{S^i})| & \leq & |\t R(h_{S_b}) - \t R(h_{S_b^i})| 
+ 2 b \sbeta + 2 \beta(b) \label{eq:e2}\\
& = & \E_{\t z} [ c(h_{S_b}, \t z) - c(h_{S_b^i}, \t z) ] + 2 b \sbeta 
+ 2 \beta(b) M  \leq  \sbeta + 2 b \sbeta + 2 \beta(b) M. \nonumber
\end{eqnarray}
The lemma's statement is obtained by combining
inequalities~\ref{eq:e1} and \ref{eq:e2}.
\end{proof}

\begin{figure}[t]
\begin{center}
\begin{tabular}{ll}
\ipsfig{.325}{figure=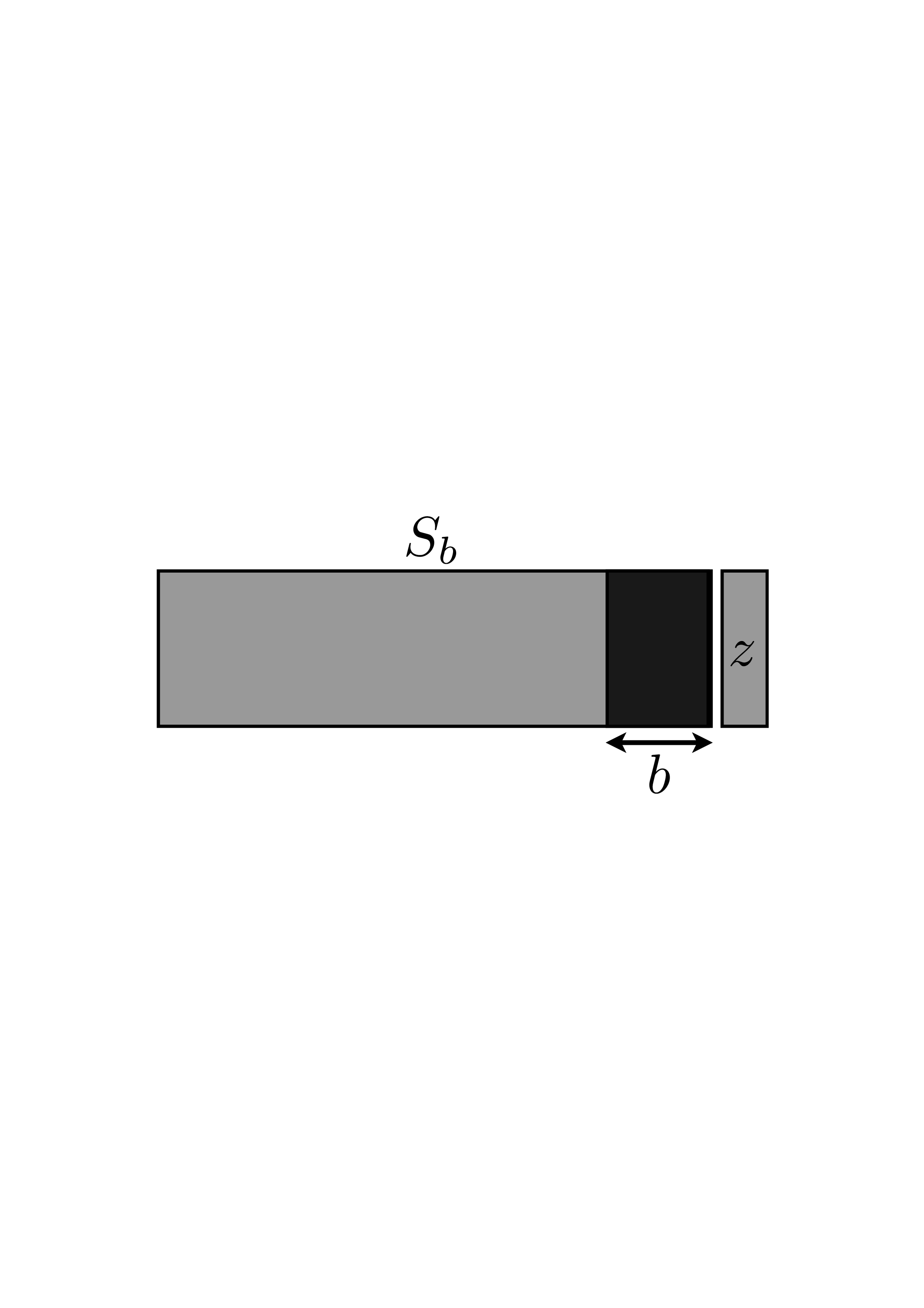} &
\ipsfig{.325}{figure=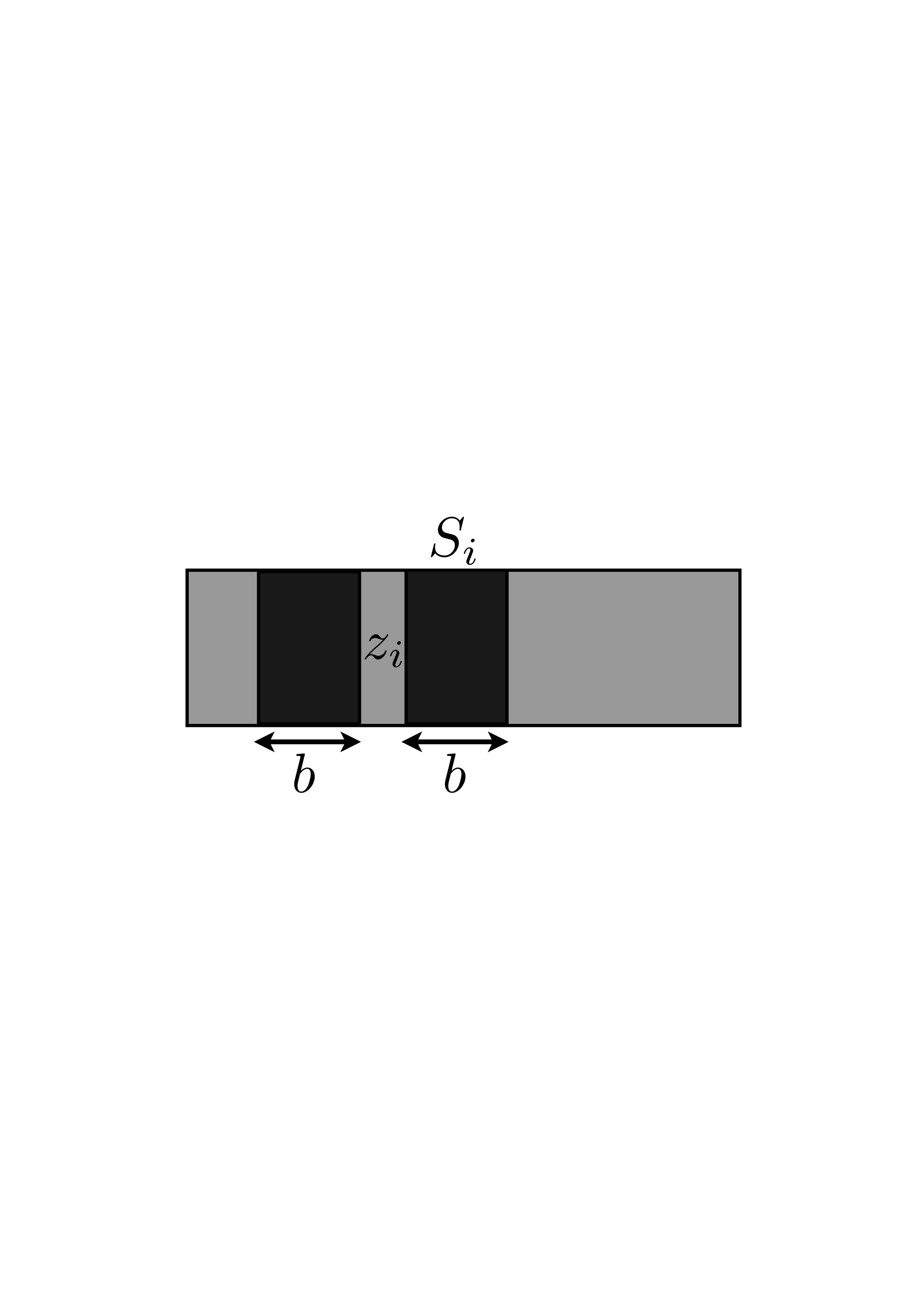}  \\
\multicolumn{1}{c}{\small (a)} & \multicolumn{1}{c}{\small (b)} \\
\ipsfig{.325}{figure=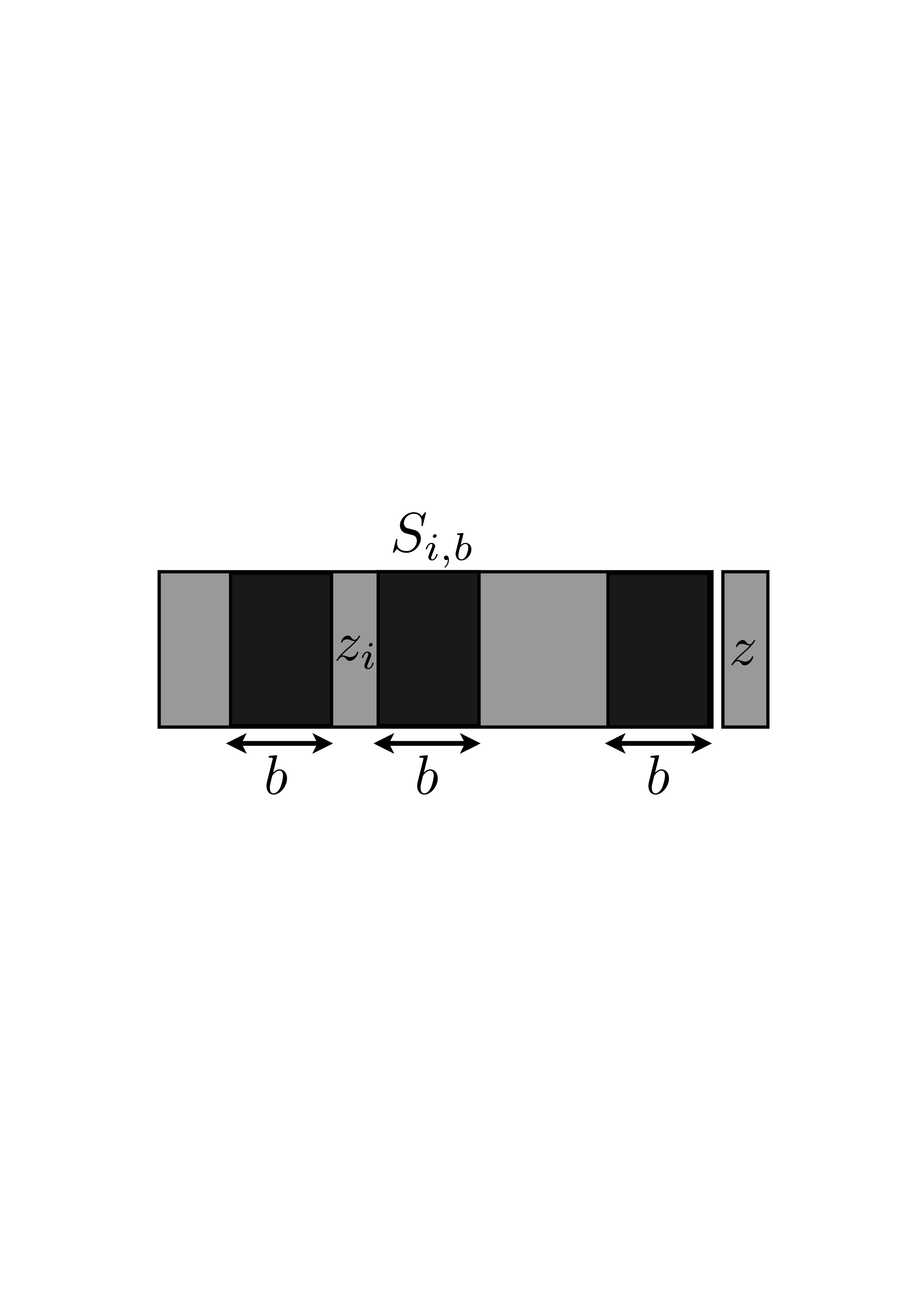} & 
\ipsfig{.325}{figure=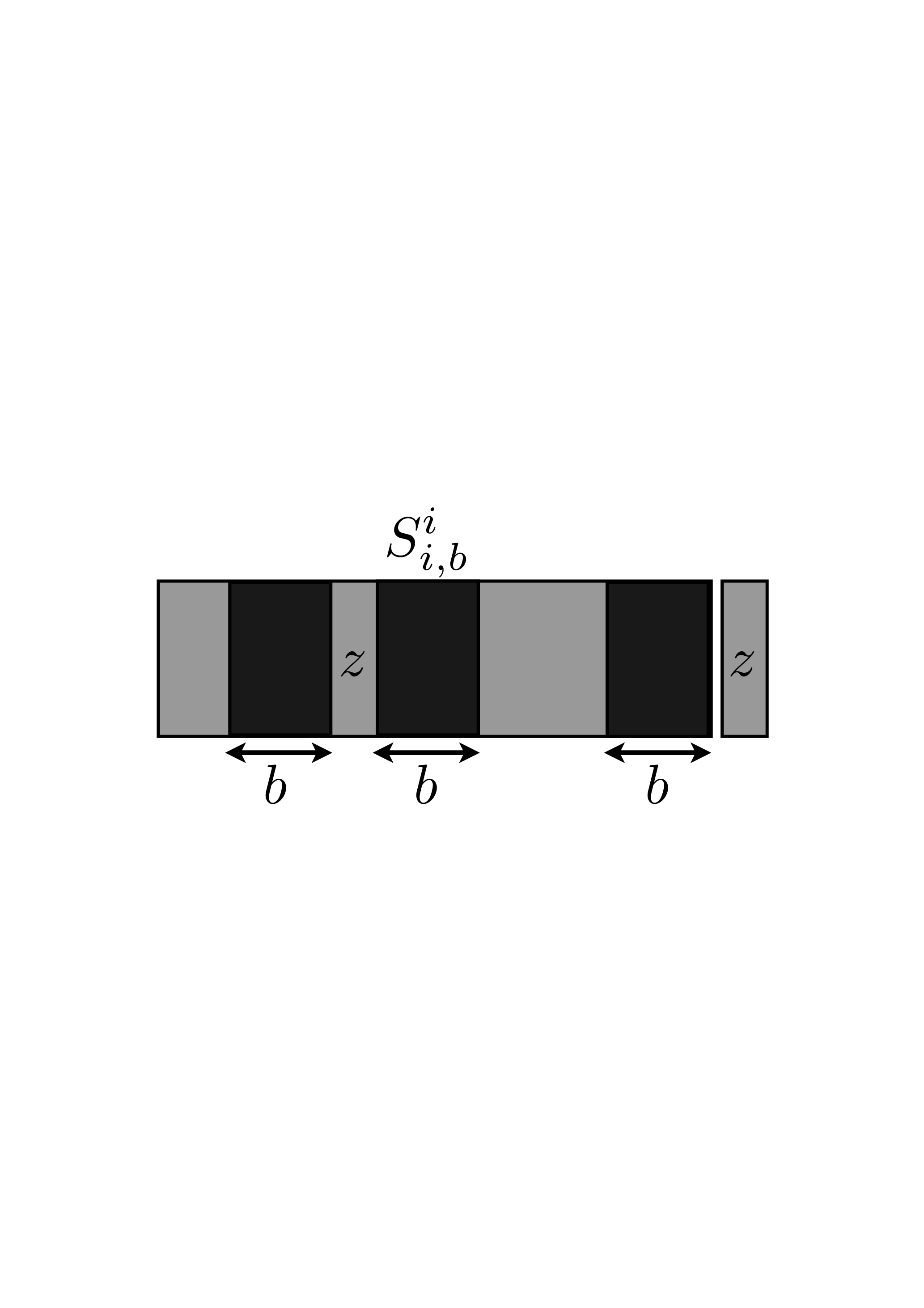}\\
\multicolumn{1}{c}{\small (c)} & \multicolumn{1}{c}{\small (d)}
\end{tabular}
\end{center}
\caption{Illustration of the sequences derived from $S$ that are
considered in the proofs.}
\label{fig:S}
\end{figure}

\subsection{Bound on Expectation}
\label{sec:bound-on-expectation}

As mentioned earlier, to obtain an explicit bound after application of
a generalized McDiarmid's inequality, we also need to bound
$\E_S[\Phi(S)]$\ignore{ and determine the constant $l$ for which
$\Phi$ is $l$-Lipschitz}. This is done by analyzing independent blocks
using Lemma~\ref{lemma:blocks}.

\begin{lemma}
\label{lemma:E-bound}
Let $h_S$ be the hypothesis returned by a $\sbeta$-stable algorithm
trained on a sample $S$ drawn from a stationary $\beta$-mixing
distribution. Then, for all $b \in [1, m]$, the following inequality
holds:
\begin{equation}
\E_S[|\Phi(S)|] \leq (6b + 1) \sbeta + 3 \beta(b) M.
\end{equation}
\end{lemma}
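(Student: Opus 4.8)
The plan is to bound $\E_S[|\Phi(S)|] = \E_S[|R(h_S) - \h R(h_S)|]$ by inserting, between the true error and the empirical error, two surrogate quantities built on the truncated sample $S_b$ (the sample $S$ with its last $b$ points removed): the independent-test error $\t R(h_{S_b}) = \E_{\t z}[c(h_{S_b}, \t z)]$ already used in Lemma~\ref{lemma:risk}, and the empirical error $\t{\h R}(h_{S_b})$ of the \emph{same} hypothesis $h_{S_b}$ evaluated over an independent-block copy of the retained points. One application of the triangle inequality splits $|\Phi(S)|$ into three pieces: a true-error discrepancy $|R(h_S) - \t R(h_{S_b})|$, a ``generalization gap on independent blocks'' $|\t R(h_{S_b}) - \t{\h R}(h_{S_b})|$, and an empirical-error discrepancy $|\t{\h R}(h_{S_b}) - \h R(h_S)|$. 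I would bound each piece by a $\sbeta$-stability contribution of order $b\sbeta$ (we remove or displace at most $b$ points) plus a dependent-to-independent contribution of order $\beta(b)M$ coming from Lemma~\ref{lemma:blocks}. The three pairs of reductions are intended to produce the $6b\sbeta$ and $3\beta(b)M$ of the statement, with the single point in which $S$ and the leave-one-out sample differ accounting for the residual $\sbeta$.

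Concretely, for the true-error piece I would reuse the computation of Lemma~\ref{lemma:risk}: the bound $|c(h_S, z) - c(h_{S_b}, z)| \le b\sbeta$ holds pointwise in $z$ by $\sbeta$-stability and hence passes harmlessly through $\E_S[|\cdot|]$, while the replacement of the conditional test expectation by the independent one $\t R(h_{S_b})$ is an instance of Lemma~\ref{lemma:blocks} with two blocks separated by a gap of size $b$. The empirical-error piece is treated symmetrically: $\sbeta$-stability replaces $h_S$ by $h_{S_b}$ on each retained training point at cost $b\sbeta$, and Lemma~\ref{lemma:blocks} replaces the dependent empirical average over the retained block by its independent-block version $\t{\h R}(h_{S_b})$. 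The middle piece is where I would invoke the renaming (ghost-point) identity on the independent blocks, so that the expected empirical loss of $h_{S_b}$ is matched to its expected loss on a fresh point after one further $\sbeta$-stable coordinate swap.

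The main obstacle, and the step I would scrutinize hardest, is that Lemma~\ref{lemma:blocks} controls only a \emph{difference of expectations} $|\E_Q[h] - \E_P[h]|$, whereas the target quantity carries the absolute value \emph{inside} the expectation. The stability terms cause no trouble, since they are pointwise and survive the triangle inequality under $\E_S[|\cdot|]$; the difficulty is that each dependent-to-independent discrepancy (for instance $\E_z[c(h_{S_b},z)\mid S] - \t R(h_{S_b})$) is not pointwise small and is tamed by Lemma~\ref{lemma:blocks} only after the expectation over $S$ has been taken. The whole argument therefore hinges on organizing the decomposition so that, once all pointwise stability reductions have been applied, the only randomness left under each absolute value is exactly a discrepancy between a dependent block measure and its product counterpart for a single bounded function, to which Lemma~\ref{lemma:blocks} can be applied. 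Verifying that no residual sampling fluctuation of $\h R$ is left stranded inside an absolute value that $\sbeta$-stability cannot reach --- i.e.\ that every non-stability term has precisely the product-measure form required by Lemma~\ref{lemma:blocks} --- is the delicate part of the write-up, and is where the bookkeeping with the four blocks of Figure~\ref{fig:S} must be carried out with care.
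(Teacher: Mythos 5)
Your decomposition breaks down at the middle piece, and the failure is structural, not a matter of bookkeeping. The truncated sample $S_b$ is a \emph{single contiguous block}: deleting the last $b$ points creates a gap between the training sample and the test point (which is exactly what Lemma~\ref{lemma:risk} exploits, with two blocks), but it creates no separation whatsoever \emph{inside} the empirical sum. Consequently your ``independent-block copy of the retained points'' is vacuous --- there is only one block, so the copy has the same law as $S_b$ itself --- and your middle quantity $|\t R(h_{S_b}) - \t{\h R}(h_{S_b})|$ is just the generalization gap of a hypothesis evaluated on its own dependent training sample, i.e., the very quantity the lemma is trying to bound (for a sample of size $m-b$); the argument is circular. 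The ghost-point swap you invoke for this piece requires the evaluated point to be independent of the remaining training points while everything else is held fixed; within a contiguous dependent block this is false, since each $z_i$ remains correlated with its neighbors, so the renaming identity behind ``one further $\sbeta$-stable coordinate swap'' does not hold. The only way to patch this with a fixed block structure is to swap \emph{whole blocks}, as in Lemma~\ref{lem:exp_bound} of the $\beta$-mixing section, but that costs (block size)$\,\cdot\,\sbeta$ per swap and a $(\mu - 1)\beta(b)M$ penalty over $\mu$ blocks, which cannot produce the constants $(6b+1)\sbeta + 3\beta(b)M$.

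The missing idea is per-point isolation, and it is the heart of the paper's proof. For \emph{each} index $i$, the paper removes the $b$ points on either side of $z_i$ in addition to the last $b$ points, forming $S_{i,b} = S_i \cap S_b$ (Figure~\ref{fig:S}(c)): at most four blocks, one of which is the singleton $\{z_i\}$, with the test point $z$ playing the role of the fourth. Stability is applied twice to pass from $h_S$ to $h_{S_{i,b}}$ --- once in the empirical term and once in the true-error term --- at cost $2 \times 3b\sbeta = 6b\sbeta$, since three blocks of at most $b$ points are removed from the training set. Lemma~\ref{lemma:blocks} is then applied once, for each $i$, to the four-block function $c(h_{S_{i,b}}, z_i) - c(h_{S_{i,b}}, z)$ (bounded by $M$), giving the penalty $(4-1)\beta(b)M = 3\beta(b)M$. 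Only at that point, in the independent-block world, is $\t z_i$ genuinely independent of everything else, so the renaming swap with $\t z$ is legitimate and one final application of stability yields the residual $+\sbeta$. Note also that the obstacle you flag about absolute values inside the expectation is not resolved in the paper by clever organization of the decomposition: the paper simply bounds $\E_S[\Phi(S)]$ from above and, ``following the same steps,'' from below, i.e., it proves two one-sided bounds on the expectation rather than pushing Lemma~\ref{lemma:blocks} through an absolute value.
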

\begin{proof}
\ignore{
We first analyze the term $\E_S[\h R(h_S)]$. Let $S_i$ be the sequence
$S$ with the $b$ points before and after point $z_i$
removed. Figure~\ref{fig:S}(b) illustrates this definition. $S_i$ is
thus made of three blocks. Let $\t S_i$ denote a similar set of three
blocks each with the same distribution as the corresponding block in
$S_i$, but such that the three blocks are independent. In particular,
the middle block reduced to one point $\t z_i$ is independent of the
two others. By the $\sbeta$-stability of the algorithm, 
\begin{equation}
\E_S [\h R(h_S)] = \E_S \left [ \frac{1}{m} \sum_{i=1}^m c(h_S, z_i) \right ] 
\leq \E_{S_i} \left [ \frac{1}{m} \sum_{i=1}^m c(h_{S_i}, z_i)  \right ] 
+ 2 b \sbeta.
\end{equation}
Applying Lemma~\ref{lemma:blocks} to the first term of the right-hand
side yields
\begin{equation}
\E_S [\h R(h_S)] \leq \E_{\t S_i} \left [ \frac{1}{m} \sum_{i=1}^m
c(h_{\t S_i}, \t z_i) \right ] + 2 b \sbeta + 2 \beta(b) M.
\end{equation}
Combining the independent block sequences associated to $\h R(h_S)$
and $R(h_S)$ will help us prove the lemma in a way similar to the
i.i.d.\ case treated in \citep{bousquet-jmlr}. 
}
Let $S_b$ be defined as in the proof of Lemma~\ref{lemma:risk}. To
deal with independent block sequences defined with respect to the same
hypothesis, we will consider the sequence $S_{i, b} = S_i \cap S_b$,
which is illustrated by Figure~\ref{fig:S}(c). This can result in as
many as four blocks.  As before, we will consider a sequence $\t S_{i,
b}$ with a similar set of blocks each with the same distribution as
the corresponding blocks in $S_{i, b}$, but such that the blocks are
independent.

Since three blocks of at most $b$ points are removed from each
hypothesis, by the $\sbeta$-stability of the learning algorithm,
the following holds:
\begin{eqnarray}
\E_S [\Phi(S)] & = & \E_S [\h R(h_S) - R(h_S)] = \E_{S,z} \left [ \frac{1}{m} \sum_{i=1}^m c(h_S, z_i) - c(h_S, z) \right] \\
& \leq & \E_{S_{i,b}, z} \left[ \frac{1}{m} \sum_{i = 1}^m
  c(h_{S_{i,b}}, z_i) - c(h_{S_{i,b}}, z) \right] + 6 b \sbeta.
\end{eqnarray}
The application of Lemma~\ref{lemma:blocks} to the difference of
two cost functions also bounded by $M$ as in the right-hand side leads
to
\begin{equation}
\E_S [\Phi(S)] \leq \E_{\t S_{i,b}, \t z} \left [ \frac{1}{m}
    \sum_{i=1}^m c(h_{\t S_{i,b}}, \t z_i) - c(h_{\t S_{i,b}}, \t z)
    \right] + 6 b \sbeta + 3 \beta(b) M.
\end{equation}
Now, since the points $\t z$ and $\t z_i$ are independent and since
the distribution is stationary, they have the same distribution and we
can replace $\t z_i$ with $\t z$ in the empirical cost. Thus, we can write
\begin{equation*}
\E_S [\Phi(S)] \leq \E_{\t S_{i,b}, \t z} \left [ \frac{1}{m}
\sum_{i=1}^m c(h_{\t S_{i,b}^i}, \t z) - c(h_{\t S_{i,b}}, \t z)
\right ] + 6 b \sbeta + 3 \beta(b) M \leq  \sbeta + 6 b \sbeta + 3 \beta(b) M,
\end{equation*}
where $\t S_{i,b}^i$ is the sequence derived from $\t S_{i,b}$ by
replacing $\t z_i$ with $\t z$. The last inequality holds by
$\sbeta$-stability of the learning algorithm. The other side of the
inequality in the statement of the lemma can be shown 
following the same steps.
\end{proof}

\subsection{$\varphi$-mixing Generalization Bounds}
\label{sec:phi-generalization-bounds}

We are now prepared to make use of a concentration inequality to
provide a generalization bound in the $\varphi$-mixing scenario.
Several concentration inequalities have been shown in $\varphi$-mixing
case, e.g.\ \citet{marton,samson,chazottes,leo}.  We will
use that of \citet*{leo}, which is very similar to
that of \citet{chazottes} modulo the fact that the latter requires a
finite sample space.

These concentration inequalities are generalizations of the of
following inequality of \citet{mcdiarmid} commonly used in the i.i.d.\
setting.
\begin{theorem}[\citet{mcdiarmid}, 6.10]
\label{thm:mcdiarmid}
Let $S = (Z_1,\ldots,Z_m)$ be a sequence of random variables, each taking
values in the set $Z$, then for any measurable function $\Phi : Z^m \to
\Rset$ that satisfies the following, $\forall i \in {1,\ldots,m}, \forall
z_i,z_i' \in Z$,
\begin{equation*}
\left| \E_S \Big[ \Phi(S) \big| Z_1 = z_1,\ldots, Z_i = z_i \Big] - 
\E_S \Big[ \Phi(S) \big| Z_1 = z_1,\ldots, Z_i = z_i' \Big] \right| \leq
c_i,
\end{equation*}
for constants $c_i$.
Then, for all $\e > 0$,
\begin{equation*}
\Pr[ | \Phi - \E[\Phi] \geq \e ] 
\leq 2 \exp \left( \frac{-2 \e^2}{\sum_{i=1}^m c_i^2} \right).
\end{equation*}
\end{theorem}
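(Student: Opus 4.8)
The plan is to prove this concentration bound via the classical Doob martingale (Azuma--Hoeffding) argument, which is the standard route to McDiarmid's inequality and which, importantly, requires no independence among the $Z_i$ — only the stated conditional bounded-differences hypothesis. First I would introduce the filtration generated by the prefixes $Z_1, \ldots, Z_i$ and define the Doob martingale $V_i = \E_S[\Phi(S) \mid Z_1, \ldots, Z_i]$, with the conventions $V_0 = \E_S[\Phi(S)]$ and $V_m = \Phi(S)$. Writing $g_i(z_1, \ldots, z_i) = \E_S[\Phi(S) \mid Z_1 = z_1, \ldots, Z_i = z_i]$, the total deviation telescopes as $\Phi(S) - \E_S[\Phi(S)] = \sum_{i=1}^m D_i$, where $D_i = V_i - V_{i-1}$ is the $i$-th martingale difference.

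The crux of the argument is to show that, conditioned on the past $Z_1, \ldots, Z_{i-1}$, each $D_i$ is a zero-mean random variable whose range is at most $c_i$. The zero-mean property is immediate from the tower rule, $\E_S[D_i \mid Z_1, \ldots, Z_{i-1}] = 0$. For the range, observe that by the tower property $V_{i-1} = \E_S[g_i(Z_1, \ldots, Z_{i-1}, Z_i) \mid Z_1, \ldots, Z_{i-1}]$, so conditioned on $Z_1 = z_1, \ldots, Z_{i-1} = z_{i-1}$ the quantity $V_{i-1}$ is a fixed constant and $D_i$ is an affine shift of $g_i(z_1, \ldots, z_{i-1}, Z_i)$. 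Hence the conditional oscillation of $D_i$ equals the oscillation of $g_i$ in its last argument, which is exactly what the hypothesis controls: $|g_i(z_1,\ldots,z_i) - g_i(z_1,\ldots,z_i')| \leq c_i$ for all $z_i, z_i'$. This is precisely why the hypothesis is stated on the conditional expectations rather than directly on $\Phi$ — it is what makes the martingale differences controllable in the absence of independence.

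With this conditional range bound in hand, I would invoke Hoeffding's lemma: a conditionally zero-mean variable with conditional range at most $c_i$ satisfies $\E_S[e^{s D_i} \mid Z_1, \ldots, Z_{i-1}] \leq \exp(s^2 c_i^2 / 8)$. Controlling the moment generating function of the sum then amounts to peeling off one conditioning at a time: since $e^{s \sum_{j < i} D_j}$ is measurable with respect to $Z_1, \ldots, Z_{i-1}$, it factors out of the innermost conditional expectation, and iterating gives $\E_S[e^{s(\Phi - \E_S[\Phi])}] \leq \prod_{i=1}^m \exp(s^2 c_i^2 / 8) = \exp\bigl(s^2 \sum_{i=1}^m c_i^2 / 8\bigr)$. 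A Chernoff bound $\Pr[\Phi - \E_S[\Phi] \geq \e] \leq e^{-s \e}\, \E_S[e^{s(\Phi - \E_S[\Phi])}]$ with the optimal choice $s = 4\e / \sum_i c_i^2$ yields the one-sided tail $\exp(-2\e^2 / \sum_i c_i^2)$; applying the identical argument to $-\Phi$ controls the lower tail, and a union bound supplies the factor of $2$.

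The main obstacle I anticipate is the conditional range bound of the second paragraph: one must argue carefully that the bounded-differences hypothesis, phrased on the conditional expectations $g_i$, translates into a genuine bound on the conditional oscillation of the martingale difference $D_i$ given the past, and that this holds without any independence assumption on the underlying sequence. Once this step is secured, the remainder — Hoeffding's lemma, the telescoping product over the conditionings, and the Chernoff optimization — is routine.
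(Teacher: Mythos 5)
Your proof is correct and is the standard Doob-martingale/Azuma--Hoeffding argument for this form of McDiarmid's inequality; the paper itself states this result as a citation to \citet{mcdiarmid} without proof, and your reconstruction (conditional oscillation of the martingale differences controlled by the hypothesis on the $g_i$, Hoeffding's lemma, Chernoff with $s = 4\e/\sum_i c_i^2$, union bound for the two-sided factor of $2$) matches the argument in the cited source. In particular you correctly identify why the hypothesis is phrased on conditional expectations rather than on $\Phi$ directly, which is exactly what lets the theorem apply to dependent sequences as needed elsewhere in the paper.
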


In the i.i.d.\ scenario, the requirement to produce the constants $c_i$
simply translates into a Lipschitz condition on the function $\Phi$.
Theorem 5.1 of \citet{leo} bounds precisely this quantity as
follows,\footnote{
	We should note that original bound is expressed in terms of
	$\eta$-mixing coefficients. To simplify presentation, we are
	adapting it to the case of stationary $\varphi$-mixing sequences by
	using the following straightforward inequality for a stationary
	process: $2 \varphi(j - i) \geq \eta_{ij}$. Furthermore, the bound
	presented in \citet{leo} holds when the sample space is countable,
	it is extended to the continuous case in \citet{leo-thesis}.
}
\begin{equation}
\label{eq:martingale_diff}
 c_i \leq 1 + 2 \sum_{k=1}^{m - i} \varphi(k).
\end{equation}

Given the bound in Equation \ref{eq:martingale_diff}, the
concentration bound of McDiarmid can be restated as follows, making it
easily accessible to $\varphi$-mixing distributions.
\begin{theorem}[\citet*{leo}]
\label{th:leo}
Let $\Phi: Z^m \to \mathbb{R}$ be a measurable function. If $\Phi$ is
$l$-Lipschitz with respect to the Hamming metric for some $l > 0$, then the
following holds for all $\e > 0$:
\begin{equation}
\Pr_{Z}[ |\Phi(Z) - \E[\Phi(Z)]| > \e] \leq 2 \exp \left ( { -
2 \e^2 \over  m l^2 ||\Delta_m||_\infty^2 } \right ),
\end{equation}
where $||\Delta_m||_\infty \leq 1 + 2 \displaystyle \sum_{k = 1}^m
\varphi(k)$.
\end{theorem}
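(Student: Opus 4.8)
The plan is to reduce Theorem~\ref{th:leo} to the standard bounded-difference inequality of Theorem~\ref{thm:mcdiarmid} by manufacturing the constants $c_i$ directly from the Lipschitz hypothesis. The crucial point to keep in mind is that, unlike the i.i.d.\ case where an $l$-Lipschitz function immediately supplies $c_i = l$, the conditional (martingale-type) differences that appear in Theorem~\ref{thm:mcdiarmid} are inflated by the dependence among the $Z_i$, and this inflation is precisely what the $\varphi$-mixing coefficients are meant to control.

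First I would invoke the bound of Equation~\ref{eq:martingale_diff}, which is the content of Theorem 5.1 of \citet{leo} adapted from $\eta$-mixing to stationary $\varphi$-mixing via the inequality $2\varphi(j-i) \geq \eta_{ij}$ recorded in the footnote. For a $1$-Lipschitz function this yields $c_i \leq 1 + 2\sum_{k=1}^{m-i}\varphi(k)$; since the bounded-difference condition is homogeneous under scaling of $\Phi$ (the conditional expectation is linear, so replacing $\Phi$ by $l\,(\Phi/l)$ multiplies each difference by $l$), an $l$-Lipschitz function gives $c_i \leq l\,\bigl(1 + 2\sum_{k=1}^{m-i}\varphi(k)\bigr)$. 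I would then let $\Delta_m$ denote the vector whose $i$-th entry is $\delta_i = 1 + 2\sum_{k=1}^{m-i}\varphi(k)$, so that $c_i \leq l\,\delta_i \leq l\,\|\Delta_m\|_\infty$ for every $i$.

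Next I would bound the quantity in the denominator of Theorem~\ref{thm:mcdiarmid}: since each $c_i \leq l\,\|\Delta_m\|_\infty$, summing the squares gives $\sum_{i=1}^m c_i^2 \leq m\,l^2\,\|\Delta_m\|_\infty^2$. Because this is an upper bound on $\sum_i c_i^2$, substituting it into the exponent of Theorem~\ref{thm:mcdiarmid} can only enlarge the right-hand side, so the resulting (weaker) bound $2\exp\!\bigl(-2\e^2/(m\,l^2\,\|\Delta_m\|_\infty^2)\bigr)$ remains a valid upper bound on the tail probability, which is exactly the claim. Finally, the stated estimate $\|\Delta_m\|_\infty \leq 1 + 2\sum_{k=1}^m\varphi(k)$ follows since the partial sum $\sum_{k=1}^{m-i}\varphi(k)$ is largest when $i$ is smallest (at $i=1$ it contains $m-1$ nonnegative terms), so the maximal entry is at most $1 + 2\sum_{k=1}^{m}\varphi(k)$.

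The main obstacle here is conceptual rather than computational: one must recognize that the entire effect of the dependence is absorbed into the constants $c_i$, and that the right way to capture it is through the per-coordinate tail sums $\sum_{k=1}^{m-i}\varphi(k)$ rather than a single uniform mixing term. Once Equation~\ref{eq:martingale_diff} is available, the remaining work is only the scaling argument and a direct application of Theorem~\ref{thm:mcdiarmid}; no additional probabilistic input is required.
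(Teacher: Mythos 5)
Your proposal is correct and follows essentially the same route the paper intends: it takes the martingale-difference bound $c_i \leq l\,(1 + 2\sum_{k=1}^{m-i}\varphi(k))$ from Equation~\ref{eq:martingale_diff}, uniformly bounds each $c_i$ by $l\,\|\Delta_m\|_\infty$ so that $\sum_i c_i^2 \leq m\,l^2\,\|\Delta_m\|_\infty^2$, and plugs this into Theorem~\ref{thm:mcdiarmid}, which is exactly how the paper explains its factor-of-$4$ improvement over the Azuma-based statement in \citet{leo}. The paper gives no more detail than this sketch, and your scaling argument and the identification of the maximal entry of $\Delta_m$ at $i=1$ fill in the remaining routine steps correctly.
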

It should be pointed out that the statement of the theorem in this
paper is improved by a factor of $4$ in the exponent, from the one
stated in \citet{leo} Theorem 1.1.  This can be achieve
straightforwardly by following the same steps as in the proof by
\citet{leo} and making use of the general form of McDiarmid's
inequality (Theorem~\ref{thm:mcdiarmid}) as opposed to Azuma's
inequality.

This section presents several theorems that constitute the main
results of this paper. The following theorem is constructed form the
bounds shown in the previous three sections.

\begin{theorem}[General Non-i.i.d.\ Stability Bound]
\label{th:main1}
Let $h_S$ denote the hypothesis returned by a $\sbeta$-stable algorithm
trained on a sample $S$ drawn from a $\varphi$-mixing stationary
distribution and let $c$ be a measurable non-negative cost function upper
bounded by $M > 0$, then for any $b \in [0, m]$ and any $\e > 0$, the
following generalization bound holds
{\small
\begin{equation*}
\Pr_S \left[\left|R(h_S) - \h R(h_S)\right| >  \e + (6 b + 1) \sbeta + 6 M
\varphi(b) \right] \leq 2 \exp \left( \frac{ -2 \e^2 (1 + 2 \sum_{i=1}^m \varphi(i))^{-2}
} {m ((b + 1) 2 \sbeta + 2 M \varphi(b) + M/m)^2  } \right).
\end{equation*}}
\end{theorem}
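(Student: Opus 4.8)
The plan is to apply the generalized McDiarmid inequality of Theorem~\ref{th:leo} to the single function $\Phi(S) = R(h_S) - \h R(h_S)$, drawing its two required ingredients from the two preceding lemmas: the Lipschitz (Hamming) constant from Lemma~\ref{lemma:phi-bound}, and a bound on $\E_S[\Phi(S)]$ from Lemma~\ref{lemma:E-bound}. Throughout I would use the elementary inequality $\beta(b) \le \varphi(b)$, which follows at once from the two definitions since the expectation over $B$ of $\sup_A|\Pr[A\mid B]-\Pr[A]|$ is dominated by the joint supremum over $A$ and $B$; this lets me state everything in the $\varphi$-mixing case, which is the setting of the theorem.

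First I would read off the Lipschitz constant. Lemma~\ref{lemma:phi-bound} shows that replacing a single coordinate of $S$ changes $\Phi$ by at most $(b+1)2\sbeta + 2\beta(b)M + M/m \le (b+1)2\sbeta + 2\varphi(b)M + M/m =: l$, so $\Phi$ is $l$-Lipschitz with respect to the Hamming metric. Feeding this $l$ into Theorem~\ref{th:leo}, together with $\|\Delta_m\|_\infty \le 1 + 2\sum_{k=1}^m \varphi(k)$, immediately produces the right-hand side of the claimed bound: the denominator $m\,l^2\,\|\Delta_m\|_\infty^2$ is exactly $m\,((b+1)2\sbeta + 2M\varphi(b) + M/m)^2\,(1+2\sum_{i=1}^m\varphi(i))^2$, so that
\begin{equation*}
\Pr_S\bigl[\,|\Phi(S) - \E_S[\Phi(S)]| > \e\,\bigr] \le 2\exp\left(\frac{-2\e^2\,(1+2\sum_{i=1}^m\varphi(i))^{-2}}{m\,((b+1)2\sbeta + 2M\varphi(b) + M/m)^2}\right).
\end{equation*}

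The remaining step is to convert this deviation-from-the-mean statement into the deviation-from-zero statement of the theorem, which is where the additive shift $(6b+1)\sbeta + 6M\varphi(b)$ enters. By the triangle inequality $|\Phi(S)| \le |\Phi(S) - \E_S[\Phi(S)]| + |\E_S[\Phi(S)]|$ together with $|\E_S[\Phi(S)]| \le \E_S[|\Phi(S)|]$, Lemma~\ref{lemma:E-bound} gives $|\E_S[\Phi(S)]| \le (6b+1)\sbeta + 3\beta(b)M \le (6b+1)\sbeta + 6M\varphi(b)$, the last step being a (loose) application of $\beta(b)\le\varphi(b)$. Consequently the event $\{\,|\Phi(S)| > \e + (6b+1)\sbeta + 6M\varphi(b)\,\}$ is contained in the event $\{\,|\Phi(S) - \E_S[\Phi(S)]| > \e\,\}$, and applying the displayed bound to the latter yields the theorem.

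I do not expect a serious obstacle here, since the two hard pieces of work are already isolated in Lemmas~\ref{lemma:phi-bound} and~\ref{lemma:E-bound}; the only points requiring care are the bookkeeping of constants (matching the $2M\varphi(b)$ of the Lipschitz term in the exponent against the $6M\varphi(b)$ of the shift, and the passage from $\beta(b)$ to $\varphi(b)$) and the observation that $b\in[0,m]$ remains a free parameter whose optimization is deferred rather than carried out inside this statement.
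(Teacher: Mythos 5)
Your proposal is correct and follows exactly the route the paper takes: its proof of Theorem~\ref{th:main1} is the one-line statement that the result ``follows directly [from] the application of Lemma~\ref{lemma:phi-bound} and Lemma~\ref{lemma:E-bound} to Theorem~\ref{th:leo}.'' Your write-up simply supplies the details the paper leaves implicit, including the bridge $\beta(b)\leq\varphi(b)$ needed because the two lemmas are stated for $\beta$-mixing, and the observation that the stated shift $6M\varphi(b)$ is looser than the $3M\varphi(b)$ the lemma actually delivers.
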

\begin{proof}
The theorem follows directly the application of
Lemma~\ref{lemma:phi-bound} and Lemma~\ref{lemma:E-bound} to
Theorem~\ref{th:leo}.
\end{proof}
The theorem gives a general stability bound for $\varphi$-mixing
stationary sequences. If we further assume that the sequence is
algebraically $\varphi$-mixing, that is for all $k$, $\varphi(k) = \varphi_0
k^{-r}$ for some $r > 1$, then we can solve for the value of $b$ to
optimize the bound.\\

\begin{theorem}[Non-i.i.d.\ Stability Bound for Algebraically Mixing Sequences]
\label{th:main2}
Let $h_S$ denote the hypothesis returned by a $\sbeta$-stable
algorithm trained on a sample $S$ drawn from an algebraically
$\varphi$-mixing stationary distribution, $\varphi(k) = \varphi_0 k^{-r}$
with $r > 1$ and let $c$ be a measurable non-negative cost function
upper bounded by $M > 0$, then for any
$\e > 0$, the following generalization bound holds
{\small
\begin{equation*}
\Pr_S \left[\left|R(h_S) - \h R(h_S)\right| >  \e + \sbeta + (r+1) 6
M  \varphi(b) \right] \leq 2 \exp \left( \frac{ -2 \e^2 (1 + 2 \varphi_0 r/(r - 1))^{-2} 
}{m (2 \sbeta + (r+1) 2 M \varphi(b) + M/m)^2  } \right),
\end{equation*}}
where $\varphi(b) = \varphi_0 \left ( {\sbeta \over r \varphi_0 M}
\right)^{r / (r + 1)}$.
\end{theorem}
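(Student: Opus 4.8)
The plan is to specialize the general bound of Theorem~\ref{th:main1} to the algebraically mixing case and then choose the free parameter $b$ so as to minimize the additive slack term appearing alongside $\e$. Two ingredients must be handled separately: the sum $\sum_{i=1}^m \varphi(i)$ sitting in the numerator of the exponent, and the $b$-dependent expressions $(6b+1)\sbeta + 6M\varphi(b)$ (the additive slack) and $(b+1)2\sbeta + 2M\varphi(b) + M/m$ (the Lipschitz constant in the denominator). Both must collapse into the clean $(r+1)$-weighted forms of the statement.

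First I would control the sum. Under $\varphi(k) = \varphi_0 k^{-r}$ with $r>1$, comparison with the integral $\int_1^m x^{-r}\,dx$ gives $\sum_{i=1}^m \varphi(i) \le \varphi_0\bigl(1 + \tfrac{1}{r-1}\bigr) = \varphi_0\, r/(r-1)$, uniformly in $m$. Since this quantity enters the exponent only through the factor $(1 + 2\sum_{i=1}^m\varphi(i))^{-2}$ multiplying $-2\e^2$, replacing the sum by the larger quantity $\varphi_0 r/(r-1)$ enlarges $1 + 2(\cdot)$, hence shrinks this factor and makes the exponent less negative; this is the correct direction for preserving an upper bound on the probability, and it produces the factor $(1 + 2\varphi_0 r/(r-1))^{-2}$ in the statement.

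Next I would optimize $b$. Writing the slack as $\sbeta + 6b\sbeta + 6M\varphi_0 b^{-r}$ and differentiating in $b$, the stationarity condition $6\sbeta = 6rM\varphi_0 b^{-r-1}$ yields the optimal $b = (rM\varphi_0/\sbeta)^{1/(r+1)}$, whence $\varphi(b) = \varphi_0 b^{-r} = \varphi_0\bigl(\sbeta/(r\varphi_0 M)\bigr)^{r/(r+1)}$, exactly as claimed. The computational key is the identity obtained by rearranging the same stationarity condition, namely $b\sbeta = rM\varphi(b)$. Substituting this into the slack gives $(6b+1)\sbeta + 6M\varphi(b) = \sbeta + 6(r+1)M\varphi(b)$, and into the denominator gives $(b+1)2\sbeta + 2M\varphi(b) + M/m = 2\sbeta + 2(r+1)M\varphi(b) + M/m$; both reduce to precisely the forms in the theorem.

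Since the argument is essentially a calculus optimization, the main obstacle is less a matter of difficulty than of bookkeeping: one must verify that the single minimizer $b$ simultaneously produces the clean $(r+1)$ factor in both the slack and the Lipschitz denominator, which is exactly what the identity $b\sbeta = rM\varphi(b)$ guarantees. I would also flag the minor caveats that $b$ should lie in $[0,m]$ and is here treated as a real number rather than an integer; rounding affects only constants and leaves the form of the bound unchanged.
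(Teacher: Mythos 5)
Your proposal is correct and follows essentially the same route as the paper's proof: bound $\sum_{i=1}^m \varphi(i)$ by integral comparison using $r>1$, and choose $b$ from the stationarity condition $\sbeta b = rM\varphi(b)$, which yields the stated $\varphi(b)$ and collapses both the additive slack and the Lipschitz denominator into their $(r+1)$-weighted forms. Your explicit verification of the substitution and the remark on the direction of the inequality for the exponent are sound and merely make explicit what the paper leaves implicit.
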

\begin{proof}
For an algebraically mixing sequence, the value of $b$ minimizing the
bound of Theorem~\ref{th:main1} satisfies $\sbeta b = r M \varphi(b)$,
which gives $b = \left ( {\sbeta \over r \varphi_0 M} \right)^{-1 / (r +
1)}$ and $\varphi(b) = \varphi_0 \left ( {\sbeta \over r \varphi_0 M}
\right)^{r / (r + 1)}$. The following term can be bounded as
\begin{eqnarray*}
1 + 2 \sum_{i=1}^m \varphi(i)  
= 1 + 2 \sum_{i=1}^m \varphi_0 i^{-r}
&\leq& 1 + 2 \varphi_0 \left(1 + \int_1^m i^{-r} di \right) 
= 1 + 2 \varphi_0 \left(1 + {m^{1-r} - 1\over 1 - r} \right ).
\end{eqnarray*}
Using the assumption $r > 1$, we upper bound $m^{1-r}$ with $1$ and
find that,
\begin{eqnarray*}
1 + 2 \varphi_0 \left(1 + {m^{1-r} - 1 \over 1 - r} \right )
&\leq& 1 + 2 \varphi_0 \left(1 + {1 \over r - 1} \right ) = 1 + {2 \varphi_0 r  \over r - 1}.
\end{eqnarray*}
Plugging in this value and the minimizing value of $b$ in the bound of
Theorem~\ref{th:main1} yields the statement of the theorem.
\end{proof}
In the case of a zero mixing coefficient ($\varphi = 0$ and $b = 0$),
the bounds of Theorem~\ref{th:main1} 
coincide with the i.i.d.\ stability bound of \citep{bousquet-jmlr}.  In
order for the right-hand side of these bounds to converge, we must
have $\sbeta = o(1/\sqrt{m})$ and $\varphi(b) = o(1 / \sqrt{m})$. For
several general classes of algorithms, $\sbeta \leq O(1/m)$
\citep{bousquet-jmlr}. In the case of algebraically mixing sequences
with $r > 1$, as assumed in Theorem~\ref{th:main2}, $\sbeta \leq O(1/m)$
implies $\varphi(b) = \varphi_0 (\sbeta / (r \varphi_0 M))^{(r / (r + 1))} <
O(1 / \sqrt{m})$. The next section illustrates the application of
Theorem~\ref{th:main2} to several general classes of algorithms.

We now present the application of our stability bounds to several
algorithms in the case of an algebraically mixing sequence. We make use of
the stability analysis found in \citet*{bousquet-jmlr}, which allows us to
apply our bounds in the case of kernel regularized algorithms, $k$-local
rules and relative entropy regularization.

\subsection{Applications}
\label{sec:apps}

\subsubsection{Kernel Regularized Algorithms}

Here we apply our bounds to a family of algorithms based on the
minimization of a regularized objective function based on the norm
$\|\cdot\|_K$ in a reproducing kernel Hilbert space, where $K$ is a
positive definite symmetric kernel:
\begin{equation}
\label{eq:kernel_reg}
\argmin_{h \in H} {1 \over m} \sum_{i=1}^m c(h,z_i) + \lambda \|h\|_K^2.
\end{equation}
The application of our bound is possible, under some general conditions,
since kernel regularized algorithms are stable with $\sbeta \leq O(1/m)$
\citep{bousquet-jmlr}.  Here we briefly reproduce the proof of this
$\sbeta$-stability for the sake of completeness; first we introduce
some needed terminology.

We will assume that the cost function $c$ is \emph{$\sigma$-admissible},
that is there exists $\sigma \in \Rset_+$ such that for any two hypotheses
$h, h' \in H$ and for all $z = (x, y) \in X \times Y$,
\begin{equation}
|c(h, z) - c(h', z)| \leq \sigma |h(x) - h'(x)|.
\end{equation}
This assumption holds for the quadratic cost and most other cost
functions when the hypothesis set and the set of output labels are
bounded by some $M \in \Rset_+$: $\forall h \in H, \forall x \in X,
|h(x)| \leq M$ and $\forall y \in Y, |y| \leq M$. We will also assume
that $c$ is differentiable. This assumption is in fact not necessary
and all of our results hold without it, but it makes the presentation
simpler.

We denote by $B_F$ the Bregman divergence associated to a convex
function $F$: $B_F(f \Arrowvert g) = F(f) - F(g) - \dotp{f - g}{\nabla
F(g)}$.  In what follows, it will be helpful to define $F$ as the
objective function of a general regularization based algorithm,
\begin{equation}
	F_S(h) = \h R_S(h) + \lambda N(h),
\end{equation}
where $\h R_S$ is the empirical error as measured on the sample $S$,
$N: H \to \Rset^+$ is a regularization function and $\lambda > 0$ is
the usual trade-off parameter.  Finally, we shall use the shorthand
$\Delta h = h' - h$.

\begin{lemma}[\citet{bousquet-jmlr}]
\label{lem:stability}
A kernel regularized learning algorithm, (\ref{eq:kernel_reg}), with
bounded kernel $K(x,x) \leq \kappa < \infty$ and $\sigma$-admissible cost
function, is $\sbeta$-stable with coefficient,
\begin{equation*}
	\sbeta \leq \frac{\sigma^2 \kappa^2}{m \lambda}
\end{equation*}
\end{lemma}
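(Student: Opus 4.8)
The plan is to establish the $\sbeta$-stability coefficient for kernel regularized algorithms by exploiting the convexity of the objective function $F_S$ together with the reproducing property of the kernel. The central tool will be the Bregman divergence $B_F$ associated with $F_S$, which is nonnegative for convex $F$ and measures how much the objective changes between two minimizers. Let $S$ and $S^i$ be two samples differing in a single point, with minimizers $h = h_S$ and $h' = h_{S^i}$, and write $\Delta h = h' - h$. Since the empirical error terms are convex and the regularizer $\lambda N(h) = \lambda \|h\|_K^2$ is convex, I would first observe that the Bregman divergence of the full objective decomposes additively, $B_{F_S} = B_{\h R_S} + \lambda B_N$, and similarly for $F_{S^i}$.

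The key algebraic step is to bound the sum $B_{F_S}(h' \Arrowvert h) + B_{F_{S^i}}(h \Arrowvert h')$ from two directions. On one hand, because $h$ and $h'$ are the respective minimizers, the gradient terms $\nabla F_S(h)$ and $\nabla F_{S^i}(h')$ vanish, so this sum collapses to a difference involving only the empirical errors on the single differing point. Concretely, the cross terms in the Bregman divergences of the data-fitting parts cancel except for the contribution of the point where $S$ and $S^i$ disagree, and one uses $\sigma$-admissibility to bound that contribution by something proportional to $\sigma |\Delta h(x)|$ summed over the one differing index, giving a bound of order $\frac{\sigma}{m}|\Delta h(x_i)| + \frac{\sigma}{m}|\Delta h(x_i')|$. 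On the other hand, since $B_N$ is the Bregman divergence of $\|\cdot\|_K^2$, the sum $\lambda\bigl(B_N(h'\Arrowvert h) + B_N(h \Arrowvert h')\bigr)$ equals $2\lambda \|\Delta h\|_K^2$. Combining the two directions yields the inequality $2\lambda \|\Delta h\|_K^2 \leq \frac{\sigma}{m}\bigl(|\Delta h(x_i)| + |\Delta h(x_i')|\bigr)$.

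From here I would invoke the reproducing property and the kernel bound $K(x,x) \leq \kappa$ to control the pointwise evaluations: $|\Delta h(x)| = |\dotp{\Delta h}{K(x, \cdot)}| \leq \|\Delta h\|_K \sqrt{K(x,x)} \leq \kappa \|\Delta h\|_K$, using $\kappa$ in the form consistent with the paper's normalization. Substituting this into the previous inequality gives $2\lambda \|\Delta h\|_K^2 \leq \frac{2\sigma \kappa}{m} \|\Delta h\|_K$, and dividing through yields $\|\Delta h\|_K \leq \frac{\sigma \kappa}{m \lambda}$. A final application of $\sigma$-admissibility and the reproducing bound transfers this back to the cost function: $|c(h_S, z) - c(h_{S^i}, z)| \leq \sigma |\Delta h(x)| \leq \sigma \kappa \|\Delta h\|_K \leq \frac{\sigma^2 \kappa^2}{m \lambda}$, which is exactly the claimed stability coefficient $\sbeta$.

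I expect the main obstacle to be the bookkeeping in the two-sided Bregman divergence step, specifically verifying that the gradient terms cancel at the minimizers and correctly isolating the single-point discrepancy between $\h R_S$ and $\h R_{S^i}$ while keeping the $\sigma$-admissibility bound tight. This is the step where the convexity of the empirical risk and the optimality conditions must be combined carefully; everything afterward is a routine application of the reproducing kernel inequality. Since this argument is essentially the one given by \citet{bousquet-jmlr} for the i.i.d.\ case and does not depend on the independence of the sample points, it carries over verbatim to the mixing setting, which is why the lemma can be stated without modification.
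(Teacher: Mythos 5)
Your proposal is correct and follows essentially the same route as the paper: the additive decomposition $B_{F} = B_{\h R} + \lambda B_N$, the vanishing-gradient argument at the two minimizers to reduce the symmetrized Bregman sum to the single differing point, $\sigma$-admissibility, and the reproducing property $|\Delta h(x)| \leq \kappa \norm{\Delta h}_K$ to close the loop. You even correctly note the normalization issue (the lemma statement says $K(x,x) \leq \kappa$ while the argument needs $K(x,x) \leq \kappa^2$), which the paper's own proof glosses over.
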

\begin{proof}
Let $h$ and $h'$ be the minimizers of $F_S$ and $F_S'$ respectively
where $S$ and $S'$ differ in the first coordinate (choice of
coordinate is without loss of generality), then,
\begin{equation}
\label{eq:claim}
B_N(h' \Arrowvert h) + B_N(h \Arrowvert h') \leq \frac{2 \sigma}{m \lambda}
\sup_{x \in S}|\Delta h(x)|.
\end{equation}
To see this, we notice that since $B_{F} = B_{\widehat R} + \lambda B_{N}$,
and since a Bregman divergence is non-negative,
\begin{equation*}
\lambda \bigl(B_N(h' \Arrowvert h) + B_N(h \Arrowvert h')\bigr) \leq
B_{F_S}(h' \Arrowvert h) + B_{F_{S'}}(h \Arrowvert h').
\end{equation*}
By the definition of $h$ and $h'$ as the minimizers of $F_S$ and
$F_{S'}$,
\begin{equation*}
  B_{F_S}(h' \Arrowvert h) + B_{F_{S'}}(h \Arrowvert h') =\\ \widehat
  R_{F_S}(h') - \widehat R_{F_S}(h) + \widehat R_{F_{S'}}(h) -
  \widehat R_{F_{S'}}(h').
\end{equation*}
Finally, by the $\sigma$-admissibility of the cost function $c$ and the
definition of $S$ and $S'$,
\begin{eqnarray*}
\lambda \bigl(B_N(h' \Arrowvert h) + B_N(h \Arrowvert h')\bigr)
& \leq & \widehat R_{F_S}(h') - \widehat R_{F_S}(h) + \widehat R_{F_{S'}}(h)
- \widehat R_{F_{S'}}(h') \\
& = & \frac{1}{m} \biggl[ c(h',z_1) - c(h,z_1) +
c(h,z_1') - c(h',z_1') \biggr] \\
& \leq  & \frac{1}{m} \biggl[ \sigma |\Delta h(x_1)| + \sigma |\Delta
h(x_1')| \biggr] \\
& \leq & \frac{2 \sigma}{m}   \sup_{x \in S} |\Delta h(x)|,
\end{eqnarray*}
which establishes (\ref{eq:claim}).

Now, if we consider $N(\cdot) = \norm{\cdot}_K^2$, we have $B_N(h'
\Arrowvert h) = \norm{h' - h}_K^2$, thus $B_N(h' \Arrowvert h) + B_N(h
\Arrowvert h') = 2 \norm{\Delta h}_K^2$ and by (\ref{eq:claim}) and the
reproducing kernel property,
\begin{equation*}
\begin{split}
			2 \norm{\Delta h}_K^2 & \leq {2 \sigma \over m \lambda} \sup_{x \in S}
			|\Delta h(x)|\\
      & \leq {2 \sigma \over m \lambda} \kappa ||\Delta h||_K.
\end{split}
\end{equation*}
Thus $\norm{\Delta h}_K \leq {\sigma \kappa \over m \lambda}$. And using
the $\sigma$-admissibility of $c$ and the kernel reproducing property we
get,
\begin{equation*}
\forall z \in X \times Y, \abs{c(h', z) - c(h, z)} \leq \sigma \abs{\Delta h (x)} \leq \kappa \sigma \norm{\Delta h}_K.
\end{equation*}
Therefore, 
\begin{equation*}
\small  \forall z \in X \times Y, \abs{c(h', z) - c(h, z)} \leq {\sigma^2
\kappa^2 \over m \lambda},
\end{equation*}
which completes the proof.
\end{proof}

Three specific instances of kernel regularization algorithms are SVR,
for which the cost function is based on the $\e$-insensitive cost:
\begin{equation}
c(h,z) = |h(x) - y|_{\e} = 
\begin{cases}
0 & \text{if $|h(x) - y| \leq \e$},\\
|h(x) - y| - \e & \text{otherwise.}
\end{cases}
\end{equation}
Kernel Ridge Regression \citep{krr}, for which 
\begin{equation}
c(h,z) = (h(x) - y)^2 \;,
\end{equation}
and finally Support Vector Machines with the hinge-loss,
\begin{equation}
c(h,z) = 
\begin{cases}
0 & \text{if  } 1 - y h(x) \leq 0, \\
1 - y h(x) & \text{if  } 0 \leq y h(x) < 1, \\
1 & \text{if  } y h(x) < 0.
\end{cases}
\end{equation}

We note that for kernel regularization algorithms, as pointed out in
\citet*[Lemma 23]{bousquet-jmlr}, a bound on the labels immediately
implies a bound on the output of the hypothesis produced by equation
(\ref{eq:kernel_reg}). We formally state this lemma below.

\begin{lemma}
\label{lemma:bounded_labels}
Let $h^*$ be the solution to equation (\ref{eq:kernel_reg}), let $c$ be a
cost function and let $B(\cdot)$ be a real-valued function such that
$\forall y \in \{ y \mid \exists x \in X, \exists h \in H, y = h(x) \},
\forall y' \in Y$,
\begin{equation*}
c(y,y') \leq B(y).
\end{equation*}
Then, the output of $h^*$ is bounded as follows,
\begin{equation*}
	\forall x \in X, |h^*(x)| \leq \kappa \sqrt{\frac{B(0)}{\lambda}},
\end{equation*}
where $\lambda$ is the regularization parameter, and $\kappa^2 \geq K(x,x)$
for all $x \in X$.
\end{lemma}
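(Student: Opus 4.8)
The plan is to exploit the optimality of $h^*$ by comparing the value of the regularized objective at $h^*$ with its value at the zero hypothesis. Write $F_S(h) = \frac{1}{m}\sum_{i=1}^m c(h,z_i) + \lambda \norm{h}_K^2$ for the objective in (\ref{eq:kernel_reg}). The first observation is that the constant (zero) function lies in the RKHS $H$ and satisfies $0(x) = 0$ for every $x$, so the value $0$ belongs to the set $\set{ y \mid \exists x \in X, \exists h \in H, y = h(x)}$. Consequently the hypothesis of the lemma applies with $y = 0$ and gives $c(0, z_i) = c(0(x_i), y_i) \leq B(0)$ for each $i$.

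Next I would chain together the following inequalities. Since every cost value is non-negative and $h^*$ minimizes $F_S$,
\begin{equation*}
\lambda \norm{h^*}_K^2 \leq F_S(h^*) \leq F_S(0) = \frac{1}{m} \sum_{i=1}^m c(0, z_i) \leq B(0),
\end{equation*}
where the first inequality drops the non-negative empirical term, the second uses the minimality of $h^*$, the equality uses $\norm{0}_K = 0$, and the last uses the per-point bound just established. Rearranging yields $\norm{h^*}_K \leq \sqrt{B(0)/\lambda}$.

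Finally I would convert this norm bound into the desired pointwise bound on $h^*(x)$ via the reproducing property. For any $x \in X$ we have $h^*(x) = \dotp{h^*}{K(x, \cdot)}$, so by Cauchy--Schwarz together with $\norm{K(x,\cdot)}_K = \sqrt{K(x,x)} \leq \kappa$,
\begin{equation*}
\abs{h^*(x)} \leq \norm{h^*}_K \, \norm{K(x,\cdot)}_K \leq \kappa \norm{h^*}_K \leq \kappa \sqrt{\frac{B(0)}{\lambda}},
\end{equation*}
which is exactly the claimed bound.

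The argument is essentially routine; the only step requiring any care is verifying that the zero function is an admissible hypothesis, so that $0$ genuinely lies in the range of the hypotheses and $B(0)$ may legitimately be invoked. This is precisely what lets the regularization term be compared against a constant depending only on $B(0)$ rather than on the data. Everything else follows the standard template of plugging in a fixed competitor, using non-negativity of the loss to isolate the regularizer, and applying the reproducing kernel inequality.
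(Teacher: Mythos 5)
Your proof is correct and follows essentially the same route as the paper's: compare the objective at $h^*$ with its value at the zero hypothesis to get $\lambda \norm{h^*}_K^2 \leq B(0)$, then apply the reproducing property and Cauchy--Schwarz to convert the norm bound into a pointwise bound. The only difference is that you explicitly justify why $B(0)$ is applicable to the zero hypothesis, a detail the paper leaves implicit.
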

\begin{proof}
Let $F(h) = {1 \over m} \sum_{i=1}^m c(h,z_i) + \lambda \|h\|_K^2$ and let
{\bf 0} be the zero hypothesis, then by definition of $F$ and $h^*$,
$$ \lambda \|h^*\|_K^2 \leq F(h^*) \leq F({\bf 0}) \leq B(0).$$
Then, using the reproducing kernel property and the Cauchy-Schwartz
inequality we note,
$$\forall x \in X,  
|h^*(x)| = \langle h^*,K(x,\cdot) \rangle \leq \|h^*\|_K \sqrt{K(x,x)}
\leq \kappa \|h^*\|_K.$$
Combining the two inequalities produces the result.
\end{proof}
We note that in \citet{bousquet-jmlr}, the following the bound is also
stated: $c(h^*(x),y') \leq B(\kappa \sqrt{B(0)/\lambda})$.
However, when later applied it seems the authors use an incorrect
upper bound function $B(\cdot)$, which we remedy in the following.

\begin{corollary}
\label{cor:kernel_reg}
Assume a bounded output $Y = [0,B]$, for some $B > 0$, and assume that
$K(x,x) \leq \kappa^2$ for all $x$ for some $\kappa > 0$. Let $h_S$ denote
the hypothesis returned by the algorithm when trained on a sample $S$ drawn
from an algebraically $\varphi$-mixing stationary distribution.
Let $u = r/(r + 1) \in [\frac{1}{2}, 1]$, $M' = 2 (r + 1) \varphi_0 M
/ (r \varphi_0 M)^u$, and $\varphi'_0 = (1 + 2\varphi_0 r / (r-1))$.
Then, with probability at least $1 - \delta$, the following
generalization bounds hold for\\

\begin{enumerate}
  \renewcommand{\labelenumi}{\alph{enumi}.}
\item Support Vector Machines (SVM, with hinge-loss)
{\small
\begin{equation*}
R(h_S)
\leq \h R(h_S)
+ \frac{\kappa^2}{\lambda m}
+ \left( \frac{\kappa^2}{\lambda} \right)^{u}\frac{3 M'}{m^u}
+ \varphi'_0 
\biggl(1  + 
\frac{\kappa^2}{\lambda} + 
\biggl(\frac{\kappa^2}{\lambda}\biggr)^{u}  
\frac{M'}{m^{u - 1}} \biggr)
\sqrt{\frac{2 \log(2/\delta)}{m}},
\end{equation*}
}
where $M = 1$.
\item Support Vector Regression (SVR):
{\small
\begin{equation*}
R(h_S)
\leq \h R(h_S)
+ \frac{\kappa^2}{\lambda m}
+ \left( \frac{\kappa^2}{\lambda} \right)^{u}\frac{3 M'}{m^u}
+ \varphi'_0 
\biggl(M  + 
\frac{\kappa^2}{\lambda} + 
\biggl(\frac{\kappa^2}{\lambda}\biggr)^{u}  
\frac{M'}{m^{u - 1}} \biggr)
\sqrt{\frac{2 \log(2/\delta)}{m}},
\end{equation*}
}
where $M = \kappa \sqrt{\frac{B}{\lambda}} + B$. \\
\item Kernel Ridge Regression (KRR):
{\small
\begin{equation*}
R(h_S)
\leq \h R(h_S)
+ \frac{4 \kappa^2 B^2}{\lambda m}
+ \left( \frac{4 \kappa^2 B^2}{\lambda}\right)^{u}\frac{3 M'}{m^u}
+ \varphi'_0 \biggl(M 
+ \frac{4 \kappa^2 B^2}{\lambda}
+ \biggl(\frac{4 \kappa^2 B^2}{ \lambda}\biggr)^{u}\frac{M'}{m^{u - 1}} 
 \biggr) \sqrt{\frac{2 \log(2/\delta)}{m}},
\end{equation*}
}
where $M = \kappa^2 B^2/\lambda + B^2$.  \\
\end{enumerate}
\end{corollary}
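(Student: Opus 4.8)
The plan is to obtain each of the three displayed inequalities as a direct specialization of Theorem~\ref{th:main2}, the only real work being to supply, for each loss, an explicit stability coefficient $\sbeta$ and a uniform bound $M$ on the cost, and then to invert the exponential tail at confidence level $\delta$. Concretely, writing the theorem's guarantee as $|R(h_S) - \h R(h_S)| \le \e + \sbeta + 6(r+1)M\varphi(b)$ with failure probability $2\exp(-2\e^2(\varphi'_0)^{-2}/(m L^2))$, where $L = 2\sbeta + 2(r+1)M\varphi(b) + M/m$, I would set this probability equal to $\delta$ and solve $\e = \varphi'_0 L\sqrt{m\log(2/\delta)/2}$; the three bounds are then read off once the pair $(\sbeta, M)$ is known for each algorithm.

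First I would identify the pair $(\sigma, M)$ for each cost. The truncated hinge loss is $1$-admissible and bounded by $M = 1$ outright, so no auxiliary argument is needed. The $\e$-insensitive (SVR) loss is also $1$-admissible, and Lemma~\ref{lemma:bounded_labels} applied with $B(0) = B$ (since $Y = [0,B]$) gives $|h_S(x)| \le \kappa\sqrt{B/\lambda}$, hence $M = \kappa\sqrt{B/\lambda} + B$ by the triangle inequality. For the squared (KRR) loss, Lemma~\ref{lemma:bounded_labels} with $B(0) = B^2$ yields $|h_S(x)| \le \kappa B/\sqrt{\lambda}$, from which $M = \kappa^2 B^2/\lambda + B^2$ follows (using nonnegativity of outputs and labels), and the admissibility constant of the squared loss is likewise controlled in terms of this same output bound.

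Next I would feed $(\sigma, M)$ into Lemma~\ref{lem:stability}, giving $\sbeta \le \sigma^2\kappa^2/(\lambda m)$: this is $\kappa^2/(\lambda m)$ for SVM and SVR and $4\kappa^2 B^2/(\lambda m)$ for KRR. Substituting $\sbeta$ and $M$ into the optimized mixing term $\varphi(b) = \varphi_0(\sbeta/(r\varphi_0 M))^{u}$ and using the definitions $u = r/(r+1)$ and $M' = 2(r+1)\varphi_0 M/(r\varphi_0 M)^u$, a short computation shows that the additive penalty $\sbeta + 6(r+1)M\varphi(b)$ collapses to $\sbeta + (\kappa^2/\lambda)^u\,3M'/m^{u}$ (and its KRR analogue with $\kappa^2$ replaced by $4\kappa^2 B^2$). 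For the confidence term I would expand $L$ in $\e = \varphi'_0 L\sqrt{m\log(2/\delta)/2}$ and rewrite it as $\varphi'_0(\cdots)\sqrt{2\log(2/\delta)/m}$, absorbing the factor of $m$ so that $m\sbeta \to \kappa^2/\lambda$, the $M/m$ term contributes the leading $M$ (respectively $1$ for SVM), and $m(r+1)M\varphi(b)$ produces the $(\kappa^2/\lambda)^u M'/m^{u-1}$ term inside the parentheses; collecting constants gives the stated bounds.

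The step I expect to be the main obstacle is the KRR bookkeeping. Because the squared loss is only locally Lipschitz, its admissibility constant is not absolute but scales with the output bound $\kappa B/\sqrt{\lambda}$ from Lemma~\ref{lemma:bounded_labels}, so one must track the $\lambda$-dependence carefully to arrive simultaneously and self-consistently at $\sbeta \le 4\kappa^2 B^2/(\lambda m)$ and $M = \kappa^2 B^2/\lambda + B^2$. The SVM and SVR cases, by contrast, are immediate once the output bound is inserted, and all three bounds then follow from the identical substitute-then-invert pattern, differing only in constants.
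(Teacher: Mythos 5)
Your proposal follows essentially the same route as the paper: identify the admissibility constant and the cost bound $M$ for each loss, obtain $\sbeta$ from Lemma~\ref{lem:stability} and the output bound from Lemma~\ref{lemma:bounded_labels}, substitute into Theorem~\ref{th:main2}, and set the right-hand side to $\delta$. The only divergence is that you are more cautious than the paper on the KRR admissibility constant (the paper simply asserts $2B$-admissibility to get $\sbeta \leq 4\kappa^2 B^2/(\lambda m)$, without tracking the dependence of the Lipschitz constant of the squared loss on the output bound $\kappa B/\sqrt{\lambda}$), so your flagged ``obstacle'' is a real looseness in the paper rather than a gap in your argument.
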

\begin{proof}
For SVM, the hinge-loss is $1$-admissible giving $\sbeta \leq
\kappa^2 / (\lambda m)$, and the cost function is clearly bounded by $M=1$.

Similarly, SVR has a loss function that is $1$-admissible, thus,
applying Lemma \ref{lem:stability} gives us $\sbeta \leq \kappa^2 /
(\lambda m)$. Using Lemma \ref{lemma:bounded_labels}, with $B(0) = B$,
we can bound the loss as follows: $\forall x \in X, y \in Y, |h^*(x) -
y| \leq \kappa \sqrt{\frac{B}{\lambda}} + B$.  

Finally for KRR, we have a loss function that is $2B$-admissible and
again using Lemma \ref{lem:stability} $\sbeta \leq 4 \kappa^2 B^2 /
(\lambda m)$. Again, applying Lemma \ref{lemma:bounded_labels} with $B(0) =
B^2$ and $\forall x \in X, y\in Y, (h^*(x) - y)^2 \leq \kappa^2 B^2 /
\lambda + B^2$ \ignore{[[these are harsh bounds, a more careful one for
example is $(a-b)^p \leq \max(a,b)^p$ since $a,b\geq0$]]}. 

Plugging these values into the bound of Theorem~\ref{th:main2} and
setting the right-hand side to $\delta$ yields the statement of the
corollary.
\end{proof}

\ignore{  
\subsubsection{$k$-Local Rule Algorithms}

The next class of algorithms we will apply our bounds to are those which
determine the label of a point $x$ based on the labels of the $k$ nearby 
points from the training set. In particular, the $k$-Nearest Neighbor
algorithm produces a label for $x$ based on the label of the majority of
the $k$ closest neighbors. \citet*{devroye-knn} have shown the $k$-Nearest
Neighbors algorithm, with $\{0,1\}$-loss, has stability coefficient, 
$$\sbeta \leq \frac4m \sqrt{\frac{k}{2\pi}}.$$
We are now able to apply Theorem \ref{th:main2}.
\begin{corollary}
Let $h_S$ denote the hypothesis defined by the majority $k$-Nearest
Neighbor rule over a sample $S$ drawn from an algebraically
$\varphi$-mixing stationary distribution. If the loss is defined in terms
of the $\{0,1\}$-loss, then the following bound holds with probability $1 -
\delta$,
\begin{equation*}
R(h_S)
\leq \h R(h_S)
+ \frac{4}{m} \sqrt{\frac{k}{2 \pi}}
+ \frac{3 M' k^{u/2}}{m^u}
+ \varphi'_0 
\biggl(M  + 
4 \sqrt{\frac{k}{2 \pi}} + 
\frac{M' k^{u/2}}{m^{u - 1}} \biggr)
\sqrt{\frac{2 \log(2/\delta)}{m}},
\end{equation*}
where $u = r/(r + 1) \in [\frac{1}{2}, 1]$, $M' = 2 (r + 1) \varphi_0 M 4^u /
(r \varphi_0 M \sqrt{2 \pi})^u$, and $\varphi'_0 = (1 + 2\varphi_0 r / (r-1))$.
\end{corollary}
}

\subsubsection{Relative Entropy Regularized Algorithms}

In this section we apply Theorem \ref{th:main2} to algorithms that
produce a hypothesis $h$ that is a convex combination of base hypotheses
$h_\theta \in H$ which are parameterized by $\theta \in \Theta$. Thus, we
wish to learn a weighting function $g \in G : \Theta \to \Rset$ that is a
solution to the following optimization,
\begin{equation}
\label{eq:entropy_reg}
 \argmin_{g \in G} \frac1m \sum_{i=1}^m c(g,z_i) + \lambda D(g \| g_0),
\end{equation}
where the cost function $c : G \times Z \to \Rset$ is defined in term of a
second internal cost function $c' : H \times Z \to \Rset$:
\begin{equation*}
 c(g, z) = \int_{\Theta} c'(h_\theta, z) g(\theta) d\theta, 
\end{equation*}
and where $D$ is the Kullback-Leibler divergence or relative entropy
regularizer (with respect to some fixed distribution $g_0$):
\begin{equation*}
 D(g \| g_0) = \int_\Theta g(\theta) \ln{\frac{g(\theta)}{g_0(\theta)}}
d\theta.
\end{equation*}

It has been shown, \citep[Theorem 24]{bousquet-jmlr}, that an algorithm
satisfying equation \ref{eq:entropy_reg} and with bounded loss $c'(\cdot)
\leq M$, is $\sbeta$-stable with coefficient
\begin{equation*}
 \sbeta \leq \frac{M^2}{\lambda m}.
\end{equation*}

The application of our bounds, results in the following corollary.
\begin{corollary}
Let $h_S$ be the hypothesis produced by the optimization in
(\ref{eq:entropy_reg}), with internal cost function $c'$ bounded by $M$.
Then with probability at least $1 - \delta$, 
\begin{equation*}
R(h_S)
\leq \h R(h_S)
+ \frac{M^2}{\lambda m}
+ \frac{3 M'}{\lambda^u m^u}
+ \varphi'_0 
\biggl(M  + 
\frac{M^2}{\lambda} + 
\frac{M'}{\lambda^u m^{u - 1}} \biggr)
\sqrt{\frac{2 \log(2/\delta)}{m}},
\end{equation*}
where $u = r/(r + 1) \in [\frac{1}{2}, 1]$, $M' = 2 (r + 1) \varphi_0
M^{u+1} /
(r \varphi_0)^u$, and $\varphi'_0 = (1 + 2\varphi_0 r / (r-1))$.
\end{corollary}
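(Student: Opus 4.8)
The plan is to follow exactly the template of the proof of Corollary~\ref{cor:kernel_reg}: supply the two algorithm-dependent quantities that Theorem~\ref{th:main2} requires — a stability coefficient $\sbeta$ and a uniform bound $M$ on the (external) cost — and then carry out the same mechanical substitution. The stability coefficient is already in hand: the text records that an algorithm solving~(\ref{eq:entropy_reg}) with internal cost $c' \leq M$ is $\sbeta$-stable with $\sbeta \leq M^2/(\lambda m)$ \citep[Theorem 24]{bousquet-jmlr}. So, under the same algebraically $\varphi$-mixing stationary assumption as in Theorem~\ref{th:main2}, only the cost bound remains to be checked before invoking that theorem.

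First I would verify that the external cost $c(g,z) = \int_\Theta c'(h_\theta,z)\, g(\theta)\, d\theta$, which is the object actually appearing in $R(h_S)$ and $\h R(h_S)$, is itself bounded by $M$. Since $g$ is a probability density on $\Theta$ (so that $D(g\|g_0)$ is the relative entropy and $h$ a genuine convex combination of the $h_\theta$), we have $\int_\Theta g(\theta)\,d\theta = 1$, and hence $c(g,z) \leq M \int_\Theta g(\theta)\,d\theta = M$ using $c' \leq M$. Thus the single constant $M$ serves both as the internal-cost bound entering $\sbeta$ and as the cost bound demanded by Theorem~\ref{th:main2}; no separate constant is introduced.

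With $\sbeta = M^2/(\lambda m)$ and cost bound $M$, I would substitute into Theorem~\ref{th:main2}. The optimal mixing term becomes $\varphi(b) = \varphi_0 (\sbeta/(r\varphi_0 M))^{u} = \varphi_0\bigl(M/(r\varphi_0\lambda m)\bigr)^{u}$ with $u = r/(r+1)$, so that, using the definition $M' = 2(r+1)\varphi_0 M^{u+1}/(r\varphi_0)^u$, the additive offset collapses to $\sbeta + (r+1)6M\varphi(b) = M^2/(\lambda m) + 3M'/(\lambda^u m^u)$ — exactly the first two correction terms of the statement. Setting the right-hand side of Theorem~\ref{th:main2} equal to $\delta$ and solving gives $\e = \varphi'_0\, L\sqrt{m\log(2/\delta)/2}$, where $L = 2\sbeta + (r+1)2M\varphi(b) + M/m$ is the per-coordinate Lipschitz factor; rewriting $\sqrt{m\log(2/\delta)/2}$ as $(m/2)\sqrt{2\log(2/\delta)/m}$ turns the coefficient into $Lm/2 = M^2/\lambda + (M'/2)/(\lambda^u m^{u-1}) + M/2$, which I would bound above by $M + M^2/\lambda + M'/(\lambda^u m^{u-1})$ to reach the displayed form.

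There is no genuine mathematical obstacle here; Theorem~\ref{th:main2} and the cited stability result do all the work. The only points requiring care are the normalization argument that yields $c \leq M$ (which hinges on $g$ being a density) and the bookkeeping that packages the $r$-, $\varphi_0$- and $M$-dependence into the single constant $M'$. In particular one must track the $M^{u+1}$ power in $M'$, which arises because $\sbeta$ scales like $M^2$ here rather than like $\kappa^2$ as in the kernel case, and tolerate the mild factor-of-two loosening on the last two terms when passing from $Lm/2$ to the clean form.
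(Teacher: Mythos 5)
Your proposal is correct and follows exactly the route the paper intends: the paper states this corollary without a separate proof, relying on the same mechanical substitution of $\sbeta \leq M^2/(\lambda m)$ and cost bound $M$ into Theorem~\ref{th:main2} that is spelled out in the proof of Corollary~\ref{cor:kernel_reg}, and your bookkeeping (the collapse of $(r+1)6M\varphi(b)$ into $3M'/(\lambda^u m^u)$ and the loosening of $Lm/2$ to $M + M^2/\lambda + M'/(\lambda^u m^{u-1})$) matches the stated constants. Your explicit check that the external cost $c(g,z)$ inherits the bound $M$ from $c'$ because $g$ is a density is a small but welcome addition that the paper leaves implicit.
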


\subsection{Discussion}

The results presented here are, to the best of our knowledge, the
first stability-based generalization bounds for the class of
algorithms just studied in a non-i.i.d.\ scenario.  These bounds are
non-trivial when the condition on the regularization $\lambda \gg
1/m^{1/2 - 1/r}$ parameter holds for all large values of $m$. This
condition coincides with the i.i.d.\ condition, in the limit, as $r$
tends to infinity.  The next section gives stability-based
generalization bounds that hold even in the scenario of $\beta$-mixing
sequences.

\section{$\beta$-Mixing Generalization Bounds}
\label{sec:beta-mixing}

In this section, we prove a stability-based generalization bound that
only requires the training sequence to be drawn from a stationary
$\beta$-mixing distribution. The bound is thus more general and covers
the $\varphi$-mixing case analyzed in the previous section. However,
unlike the $\varphi$-mixing case, the $\beta$-mixing bound presented
here is not a purely exponential bound. It contains an additive term,
which depends on the mixing coefficient.

As in the previous section, $\Phi(S)$ is defined by $\Phi(S) = R(h_S)
- \h R(h_S)$. To simplify the presentation, here, we will define the
generalization error of $h_S$ by $R(h_S) = \E_z[c(h_S, z)]$. Thus,
test samples are assumed independent of $S$. By
Lemma~\ref{lemma:risk}, this can be assumed modulo the additional term
$b \sbeta + M \beta(b)$, for a cost function bounded by $M$. Note that
for any block of points $Z = z_1 \ldots z_k$ drawn independently of
$S$, the following equality 
\begin{equation}
  \E_Z\biggl[\frac{1}{|Z|}\sum_{z \in Z}c(h_S, z)\biggr] = \frac{1}{k}\sum_{i = 1}^k
\E_Z[c(h_S, z_i)] = \frac{1}{k}\sum_{i = 1}^k
\E_{z_i}[c(h_S, z_i)] = \E_z[c(h_S, z)]
\end{equation}
holds since, by stationarity, $\E_{z_i}[c(h_S, z_i)] = \E_{z_j}[c(h_S,
z_j)]$ for all $1 \leq i, j \leq k$. Thus, $R(h_S) =
\E_Z\bigl[\frac{1}{|Z|}\sum_{z \in Z}c(h_S, z)\bigr]$ for any such
block $Z$.  For convenience, we will extend the cost function $c$ to
blocks as follows:
\begin{equation}
c(h, Z) = \frac{1}{|Z|} \sum_{z \in Z} c(h, z).
\end{equation}
With this notation, $R(h_S) = \E_Z[c(h_S, Z)]$ for any block drawn
independently of $S$, regardless of the size of $Z$.

To derive a generalization bound for the $\beta$-mixing scenario, we
will apply McDiarmid's inequality to $\Phi$ defined over a sequence of
independent blocks.  The independent blocks we will be considering are
non-symmetric and thus more general than those considered by previous
authors \citep{yu, meir, lozano}.

From a sample $S$ made of a sequence of $m$ points, we construct two
sequences of blocks $S_a$ and $S_b$, each containing $\mu$ blocks.
Each block in $S_a$ contains $a$ points and each block $S_b$ in
contains $b$ points. $S_a$ and $S_b$ form a partitioning of $S$; for
any $a, b \in [0,m]$ such that $(a + b) \mu = m$, they are defined
precisely as follows:
\begin{equation}
\label{eq:block_def}
\begin{split}
&	S_a = (Z^{(a)}_1, \ldots,Z^{(a)}_\mu), \text{ with } Z^{(a)}_i =
	z_{(i-1)(a + b) + 1}, \ldots, z_{(i-1)(a + b) + a}\\
&	S_b = (Z^{(b)}_1, \ldots, Z^{(b)}_\mu), \text{ with } Z^{(b)}_i =
	z_{(i-1)(a + b) + a + 1}, \ldots, z_{(i-1)(a + b) + a + b},
\end{split}
\end{equation}
for all $i \in [1, \mu]$. We shall consider similarly sequences of
i.i.d.\ blocks $\tl Z_i^{a}$ and $\tl Z_i^{b}$, $i \in [1, \mu]$, such
that the points within each block are drawn according to the same
original $\beta$-mixing distribution and shall denote by $\tl S_a$ the
block sequence $(\tl Z^{(a)}_1, \ldots, \tl Z^{(a)}_\mu)$. 
\ignore{
In preparation for the application of McDiarmid's inequality, we first
relate the expectation of $\Phi(S)$ and that of $\Phi(\tl S_a)$ and
then give a bound on the expectation of $\Phi(\tl S_a)$.

\begin{lemma}
\label{lem:exp_blocks}
Let $S$ denote a sample of size $m$ drawn according to a stationary
$\beta$-mixing distribution, and $\tl S_a$ a block sequence defined as
above. Assume a $\sbeta$-stable algorithm, then, the following bound
relates the expectation of $\Phi(S)$ and that of $\Phi(\tl S_a)$:
\begin{equation*}
	\E_S [ \Phi(S) ] - \E_{\tl S_a} [ \Phi(\tl S_a) ]
	\leq \frac{\mu b M}{m} + 2 \mu b \sbeta + (\mu - 1) M \beta(b).
\end{equation*}
\end{lemma}
\begin{proof}
  We first write $\Phi$ in terms of $S_a$, and bound the error on the
  points that appear in $S_b$ in a trivial manner, with the intention
  of later selecting $b$ such that $\mu b = o(m)$. Let $Z$ be a block
  sequence drawn independently of $S$, then using definition of
  $\Phi$, we can write:
\small{
\begin{align*}
 \E_S [ \Phi(S) ] & =\frac{1}{m} \E_S \bigg[\sum_{z \in S} \big[c(h,z)
  - \E_Z [c(h,Z)] \big] \bigg] \\
 & = \frac{1}{m} \E_S \bigg[\sum_{z \in S_a}[c(h,z) - \E_Z [c(h,Z)]
 \big] + \sum_{z' \in S_b} \big[ c(h,z') - \E_Z [ c(h,Z) ]
 \big]\bigg] \\
 & \leq \frac{1}{m} \E_S \bigg[\sum_{z \in S_a}[c(h,z) - \E_Z [c(h,Z)]
 \big] \bigg] + \frac{\mu b M}{m} & (\text{bounding error over }S_b)\\
 & \leq \frac{1}{m} \E_{S_a} \bigg[\sum_{z \in S_a}[c(h_{S_a},z) - \E_Z [c(h_{S_a},Z)] \big] \bigg] + \frac{\mu b M}{m} + 2 \mu b \sbeta. & (\text{stability})
\end{align*}}
\nosmall The first inequality is based on the fact that the
cost of an error on $S_b$ is bounded by $M$. The second inequality
follows the stability assumption and the fact that the hypotheses
$h_S$ and $h_{S_a}$ are derived from the samples $S$ and $S_a$ which
differ by $\mu b$ points.

By Lemma~\ref{lemma:blocks}, the first term of the right-hand
side can be bounded as follows:
\begin{multline*}
	  \frac{1}{m} \E_{S_a} \bigg[\sum_{z \in S_a}[ c(h_{S_a},z) - \E_Z [
	 c(h_{S_a},Z) ] \big] \bigg] 
	 \leq \frac{1}{m} \E_{\tl S_a} \bigg[\sum_{z \in \tl S_a}[ c(h_{\tl
	 S_a},z) - \E_Z [ c(h_{\tl S_a},Z) ] \big] \bigg] + (\mu - 1) M \beta(b).
\end{multline*}
To complete the proof, we note that
\begin{equation*}
\frac{1}{m} \E_{\tl S_a} \bigg[\sum_{z \in \tl S_a}[ c(h_{\tl S_a},z) -
\E_Z [ c(h_{\tl S_a},Z) ] \big] \bigg] \leq \frac{1}{\mu} \E_{\tl S_a} \bigg[\sum_{z \in \tl S_a}[ c(h_{\tl S_a},z) -
\E_Z [ c(h_{\tl S_a},Z) ] \big] \bigg]  = \E_{\tl S_a} [ \Phi (\tl S_a) ]. 
\end{equation*}
\end{proof}
In view of the previous lemma, modulo some additional terms, to bound
the expectation of $\Phi$, we can consider the independent block
sequence $\tl S_a$ instead of $S_a$.
}
In preparation for the application of McDiarmid's inequality, we give
a bound on the expectation of $\Phi(\tl S_a)$.  Since the expectation
is taken over a sequence of i.i.d.\ blocks, this brings us to a
situation similar to the i.i.d.\ scenario analyzed by
\citet{bousquet-jmlr}, with the exception that we are dealing with
i.i.d.\ blocks instead of i.i.d.\ points.
\begin{lemma}
\label{lem:exp_bound}
Let $\tl S_a$ be an independent block sequence as defined above, then the
following bound holds for the expectation of $|\Phi (\tl S_a)|$:
\begin{equation*}
\E_{\tl S_a} [|\Phi (\tl S_a)|] \leq a \sbeta.
\end{equation*}
\end{lemma}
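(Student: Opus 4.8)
The plan is to treat the $\mu$ blocks composing $\tl S_a$ as the atomic examples of an i.i.d.\ sample, reducing the statement to the block-level analogue of the i.i.d.\ generalization-gap bound of \citet{bousquet-jmlr}. With the block cost $c(h,Z)$ defined above, the empirical error is $\h R(h_{\tl S_a}) = \frac{1}{\mu}\sum_{i=1}^\mu c(h_{\tl S_a}, \tl Z^{(a)}_i)$, while any block drawn independently of $\tl S_a$ has expected cost equal to $R(h_{\tl S_a})$, so that $R(h_{\tl S_a}) = \E_Z[c(h_{\tl S_a},Z)]$. Since the blocks $\tl Z^{(a)}_1,\dots,\tl Z^{(a)}_\mu$ are mutually independent by construction and identically distributed by stationarity, $\Phi(\tl S_a)=R(h_{\tl S_a})-\h R(h_{\tl S_a})$ is exactly a generalization gap for an i.i.d.\ sample of $\mu$ examples, the only difference with \citet{bousquet-jmlr} being that each example is a length-$a$ block rather than a single point.

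The first quantitative step is to pass from point stability to block stability. Let $\tl S_a^i$ be obtained from $\tl S_a$ by replacing the block $\tl Z^{(a)}_i$ with an independent block of the same length; then $\tl S_a$ and $\tl S_a^i$ differ in exactly $a$ of their $\mu a$ points, so a telescoping application of $\sbeta$-stability across those coordinates gives $|c(h_{\tl S_a},z)-c(h_{\tl S_a^i},z)|\le a\sbeta$ for every $z$, and therefore $|c(h_{\tl S_a},Z)-c(h_{\tl S_a^i},Z)|\le a\sbeta$ for every block $Z$. Viewed as a map from block sequences to hypotheses, the algorithm is thus uniformly $(a\sbeta)$-stable; this factor-$a$ accounting, which seals the within-block dependence inside the stability constant, is the essential new ingredient relative to the i.i.d.\ case and is what produces the factor $a$ in the target bound.

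With block stability in hand, I would run the expected-gap computation of \citet{bousquet-jmlr} one level up. Writing $\Phi(\tl S_a)=\frac{1}{\mu}\sum_{i=1}^\mu\bigl(R(h_{\tl S_a})-c(h_{\tl S_a},\tl Z^{(a)}_i)\bigr)$ and introducing for each $i$ a fresh independent block $V_i$, the replacement identity $R(h_{\tl S_a})=\E_{V_i}[c(h_{\tl S_a},V_i)]$ together with the exchangeability of $\tl Z^{(a)}_i$ and $V_i$ recasts the $i$-th contribution as a difference of costs for hypotheses trained on block sequences differing in a single block, an effect that block stability bounds by $a\sbeta$. The main obstacle is that this matched-pair exchange, applied directly, controls only the \emph{signed} quantity $\E_{\tl S_a}[\Phi(\tl S_a)]$, whereas the lemma demands the stronger bound on the expected \emph{absolute} deviation $\E_{\tl S_a}[|\Phi(\tl S_a)|]$. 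Closing this gap is the heart of the argument: I would leverage the i.i.d.\ block structure to dominate the mean absolute deviation directly—pairing each training block with its fresh copy and invoking the $(a\sbeta)$ block-stability bound on the per-pair difference before averaging, rather than only after taking the expectation. This strengthening from the bias to the absolute deviation, powered by the length-$a$ block structure rather than by raw point stability, is the step I expect to demand the most care; once it is in place the claimed inequality $\E_{\tl S_a}[|\Phi(\tl S_a)|]\le a\sbeta$ follows.
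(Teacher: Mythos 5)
Your route is the same as the paper's: treat the $\mu$ blocks as i.i.d.\ units, obtain block stability $a\sbeta$ by telescoping the point-level stability over the $a$ positions in which two block sequences differ, and then run the renaming/exchangeability argument of \citet{bousquet-jmlr} at block level with Jensen's inequality. You have also correctly located the crux: that argument natively controls only the signed quantity $\E[\Phi(\tl S_a)]$, not $\E[|\Phi(\tl S_a)|]$. But your proposal stops exactly there. ``Pairing each training block with its fresh copy and invoking the block-stability bound on the per-pair difference before averaging'' is not yet an argument, and it cannot be turned into one. The exchange of $\tl Z^{(a)}_i$ with its fresh copy $V_i$ is a measure-preserving transformation of the \emph{whole} tuple $(\tl S_a, V_i)$: applied honestly, it maps the pair $c(h_{\tl S_a}, V_i) - c(h_{\tl S_a}, \tl Z^{(a)}_i)$ to $c(h_{\tl S_a^i}, \tl Z^{(a)}_i) - c(h_{\tl S_a^i}, V_i)$, where $\tl S_a^i$ is $\tl S_a$ with its $i$th block replaced by $V_i$ --- the \emph{same} hypothesis evaluated at two different test blocks, which stability does not control at all. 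What makes the signed computation work is applying the exchange to only one of the two costs in each pair and leaving the other untouched; that asymmetric move is licensed only by linearity of expectation, which is precisely what is unavailable once you are inside the absolute value.

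In fact, no repair is possible, because the stated inequality $\E_{\tl S_a}[|\Phi(\tl S_a)|] \leq a\sbeta$ is false in general. Consider the algorithm that outputs a fixed hypothesis $h_0$ regardless of the sample: it is $\sbeta$-stable with $\sbeta = 0$, yet $\E[|\Phi(\tl S_a)|] = \E\bigl[\bigl|\E_Z[c(h_0,Z)] - \frac{1}{\mu}\sum_{i=1}^\mu c(h_0,\tl Z^{(a)}_i)\bigr|\bigr]$ is the mean absolute deviation of an empirical average of $\mu$ i.i.d.\ bounded variables, which is positive (of order $1/\sqrt{\mu}$) whenever $c(h_0,z)$ is not almost surely constant --- contradicting the claimed bound of $a \sbeta = 0$. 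Only the signed bound $|\E_{\tl S_a}[\Phi(\tl S_a)]| \leq a\sbeta$ follows from this route. You should know that the paper's own proof suffers from exactly the same defect: its final displayed equality performs the per-index swap term-by-term inside the absolute value, which is invalid for the reason above (and would prove the false statement just exhibited). So your instinct that this step ``demands the most care'' is correct --- more careful than the paper itself --- but the honest conclusion is that the lemma must be weakened to the signed expectation, with the downstream theorem adjusted accordingly (e.g., by applying McDiarmid's inequality separately to $\Phi$ and $-\Phi$ rather than to $|\Phi|$).
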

\begin{proof}
  Since the blocks $\tl Z^{(a)}$ are independent, we can replace any
  one of them with any other block $Z$ drawn from the same
  distribution. However, changing the training set also changes the
  hypothesis, in a limited way. This is shown precisely below,
\begin{eqnarray*}
	\E_{\tl S_a} [|\Phi (\tl S_a)|] 
	& = & \E_{\tl S_a} \left[ \Big| \frac{1}{\mu} \sum_{i=1}^\mu c(h_{\tl S_a},
	\tl Z^{(a)}_i) - \E_{Z} [ c(h_{\tl S_a}, Z) ] \Big| \right] \\
	& \leq & \E_{\tl S_a, Z} \left[ \Big| \frac{1}{\mu} \sum_{i = 1}^\mu c(h_{\tl S_a},
	\tl Z^{(a)}_i) - c(h_{\tl S_a}, Z)  \Big| \right] \\
	& = & \E_{\tl S_a, Z} \left[ \Big| \frac{1}{\mu} \sum_{i = 1}^\mu c(h_{\tl S_a^i}, Z) -
	c(h_{\tl S_a}, Z)  \Big| \right],
\end{eqnarray*}
where $\tl S_a^i$ corresponds to the block sequence $\tl S_a$ obtained
by replacing the $i$th block with $Z$. The inequality holds through
the use of Jensen's inequality. The $\sbeta$-stability of the learning
algorithm gives
\begin{equation*}
 \E_{\tl S_a, Z} \left[ \frac{1}{\mu} \Big| \sum_{i = 1}^\mu c(h_{\tl S_a^i}, Z) -
 c(h_{\tl S_a}, Z)  \Big| \right] \leq \E_{\tl S_a, Z} \left[ \frac{1}{\mu} \sum_{i =
 1}^\mu a \sbeta  \right] \leq a \sbeta.
\end{equation*}
\end{proof}
We now relate the non-i.i.d.\ event $\Pr[\Phi(S) \geq \e]$ to an
independent block sequence event to which we can apply McDiarmid's
inequality.
\begin{lemma}
\label{lem:prep}
Assume a $\sbeta$-algorithm. Then, for a sample $S$ drawn from a
stationary $\beta$-mixing distribution, the following bound holds,
\begin{equation}
\label{eq:lemma_prep}
	\Pr_S[|\Phi(S)| \geq \e] \leq \Pr_{\tl S_a} \big[|\Phi(\tl S_a)| -
\E[|\Phi(\tl S_a)|] \geq \e_0'\big] + (\mu - 1)\beta(b), 
\end{equation}
where $\e_0' = \e - \frac{\mu b M}{m} - 2 \mu b \sbeta -
\E_{\tl S_a'}[|\Phi(\tl S_a')|]$.
\end{lemma}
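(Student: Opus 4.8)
The plan is to pass from the dependent sample $S$ to the independent block sequence $\tl S_a$ in three stages: a purely \emph{deterministic} comparison of $\Phi(S)$ with the functional attached to the dependent $a$-blocks, then a transfer from dependent to independent blocks via Lemma~\ref{lemma:blocks} applied to an indicator, and finally a recentering that produces the constant $\e_0'$. Throughout, recall that in this section $R(h) = \E_z[c(h,z)]$ depends on the hypothesis alone, so it equals $\E_Z[c(h,Z)]$ for any block $Z$ drawn independently of $S$.

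Write $\Phi(S_a) = \frac{1}{\mu a}\sum_{z\in S_a} c(h_{S_a},z) - R(h_{S_a})$ for the analogue of $\Phi$ evaluated on the $a$-block subsequence $S_a$ of Eq.~\ref{eq:block_def}; since $h_{S_a}$ and the empirical average both depend on $S$ only through the $\mu$ blocks $Z^{(a)}_1,\dots,Z^{(a)}_\mu$, the quantity $\Phi(S_a)$ is a measurable function on their product space. First I would establish the pointwise estimate $|\Phi(S) - \Phi(S_a)| \leq \frac{\mu b M}{m} + 2\mu b\sbeta =: \delta_1$, valid for every realization of $S$. The hypotheses $h_S$ and $h_{S_a}$ are trained on samples differing in the $\mu b$ points of $S_b$, so $\sbeta$-stability controls the replacement of $h_S$ by $h_{S_a}$ inside the empirical average, as well as the true-error gap $|R(h_S)-R(h_{S_a})|\leq \mu b\sbeta$, contributing $2\mu b\sbeta$ in total. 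The remaining discrepancy is due solely to the two differing normalizations ($1/m$ over all of $S$ versus $1/(\mu a)$ over $S_a$) together with the discarded cost mass on $S_b$. This is the delicate point and I expect it to be the main obstacle: a naive estimate would charge $\frac{\mu b M}{m}$ \emph{twice}. Writing this leftover (with $h_{S_a}$ fixed) as $\bigl(\tfrac1m - \tfrac{1}{\mu a}\bigr)\sum_{z\in S_a} c + \tfrac1m\sum_{z\in S_b} c$ and using $\tfrac1m - \tfrac{1}{\mu a} = -\tfrac{b}{\mu a(a+b)} \leq 0$, the first summand lies in $[-\tfrac{\mu b M}{m},0]$ while the second lies in $[0,\tfrac{\mu b M}{m}]$, since $\tfrac{\mu b M}{m} = \tfrac{bM}{a+b}$. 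Their opposite signs force the sum to have magnitude at most $\tfrac{\mu b M}{m}$, not twice that, and this cancellation is exactly what yields the claimed slack.

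The deterministic bound gives the inclusion $\{|\Phi(S)|\geq\e\}\subseteq\{|\Phi(S_a)|\geq\e-\delta_1\}$, so $\Pr_S[|\Phi(S)|\geq\e] \leq \Pr_{S_a}[|\Phi(S_a)|\geq\e-\delta_1]$, the latter probability taken under the true (dependent) joint law $Q$ of the $a$-blocks. Next I would apply Lemma~\ref{lemma:blocks} to the function $\mathbf 1[\,|\Phi(\cdot)|\geq\e-\delta_1\,]$, which is measurable on the product of the $\mu$ block $\sigma$-algebras and bounded in absolute value by $M=1$. Consecutive $a$-blocks are separated by the $b$ points of an intervening $b$-block, so every gap $k_i$ is at least $b$, and by monotonicity of $\beta$ we have $\beta(Q)\leq\beta(b)$. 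The lemma then gives $\Pr_{S_a}[|\Phi(S_a)|\geq\e-\delta_1] \leq \Pr_{\tl S_a}[|\Phi(\tl S_a)|\geq\e-\delta_1] + (\mu-1)\beta(b)$, where the product measure $P$ of the lemma is precisely the law of the independent block sequence $\tl S_a$.

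Finally I would recenter the right-hand event. The event $\{|\Phi(\tl S_a)|\geq\e-\delta_1\}$ coincides with $\{\,|\Phi(\tl S_a)| - \E[|\Phi(\tl S_a)|] \geq \e - \delta_1 - \E[|\Phi(\tl S_a)|]\,\}$, and since $\e - \delta_1 - \E_{\tl S_a'}[|\Phi(\tl S_a')|] = \e_0'$ by definition of $\e_0'$, chaining the three steps yields the statement. Note that the constant $\E_{\tl S_a'}[|\Phi(\tl S_a')|]$ buried in $\e_0'$ is itself controlled by Lemma~\ref{lem:exp_bound}, namely $\E[|\Phi(\tl S_a)|]\leq a\sbeta$; this will be needed when the present reduction is combined with McDiarmid's inequality applied to the $\mu$ i.i.d.\ blocks, but it plays no role in the reduction itself.
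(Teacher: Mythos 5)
Your proposal is correct and follows essentially the same route as the paper: bound the $S_b$ contribution trivially by $\mu b M/m$, pass from $h_S$ to $h_{S_a}$ via $\sbeta$-stability at cost $2\mu b\sbeta$, apply Lemma~\ref{lemma:blocks} to the indicator of the resulting event (bounded by $1$, gaps at least $b$) to trade the dependent blocks for $\tl S_a$ at cost $(\mu-1)\beta(b)$, and recenter by the constant $\E_{\tl S_a'}[|\Phi(\tl S_a')|]$. The only difference is that you prove a two-sided pointwise bound $|\Phi(S)-\Phi(S_a)|\leq \delta_1$ via the sign-cancellation between the renormalization and the discarded $S_b$ mass, whereas the paper works directly with event inclusions and sidesteps that issue by using the one-sided inequality $\frac{1}{m}|\cdot|\leq\frac{1}{\mu a}|\cdot|$; both yield the same constants.
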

\begin{proof}
  The proof consists of first rewriting the event in terms of $S_a$
  and $S_b$ and bounding the error on the points in $S_b$ in a trivial
  manner. This can be afforded since $b$ will be eventually chosen to
  be small. Since $|\E_{Z'} [c(h_S,Z')] - c(h_S,z')| \leq M$ for
  any $z' \in S_b$, we can write
\begin{align*}
\Pr_S[|\Phi(S)| \geq \e] & = \Pr_S[ |R(h_S) - \h R(h_S)| \geq \e ] \\
& = \Pr_S \bigg[ \frac{1}{m} \Big| \sum_{z \in S} \E_Z [c(h_S,Z)] - c(h_S,z)
	 \Big| \geq \e \bigg] \\
& \leq \Pr_S \bigg[ \frac{1}{m} \Big| \sum_{z \in S_a} \E_Z [c(h_S,Z)] - c(h_S,z)
	 \Big|  + \frac{1}{m} \Big| \sum_{z' \in S_b} \E_{Z'} [c(h_S,Z')] -
			c(h_S,z') \Big| \geq \e \bigg] \\
& \leq \Pr_S \bigg[ \frac{1}{m} \Big| \sum_{z \in S_a} \E_Z [c(h_S,Z)] -
	c(h_S,z) \Big| + \frac{\mu b M}{m} \geq \e \bigg].
\end{align*}
By $\sbeta$-stability and $\mu a / m \leq 1$, this last term can be bounded as follows
\begin{multline*}
\Pr_S \bigg[ \frac{1}{m} \Big| \sum_{z \in S_a} \E_Z [c(h_S,Z)] -
	c(h_S,z) \Big| + \frac{\mu b M}{m} \geq \e \bigg] \leq \\
\Pr_{S_a} \bigg[ \frac{1}{\mu a} \Big| \sum_{z \in S_a}  \E_Z [c(h_{S_a},Z)] -
	c(h_{S_a},z) \Big| + \frac{\mu b M}{m} + 2 \mu b \sbeta \geq \e \bigg].
\end{multline*}
The right-hand side can be rewritten in terms of $\Phi$ and bounded in
terms of a $\beta$-mixing coefficient:
\begin{multline*}
\Pr_{S_a} \bigg[ \frac{1}{\mu a} \Big| \sum_{z \in S_a} \E_Z [c(h_{S_a},Z)] -
	c(h_{S_a},z) \Big| + \frac{\mu b M}{m} + 2 \mu b \sbeta \geq \e \bigg]\\
\begin{split}
= & \Pr_{S_a} \bigg[ |\Phi(S_a)| + \frac{\mu b M}{m} + 2 \mu b \sbeta \geq
\e \bigg] \\
\leq & \Pr_{\tl S_a} \bigg[ |\Phi(\tl S_a)| + \frac{\mu b M}{m} + 2 \mu b \sbeta \geq \e \bigg]  +
(\mu-1) \beta(b),
\end{split}
\end{multline*}
by applying Lemma~\ref{lemma:blocks} to the indicator function of the
event $\set{|\Phi(S_a)| + \frac{\mu b M}{m} + 2 \mu b \sbeta \geq
  \e}$. Since $\E_{\tl S_a'}[|\Phi(\tl S_a')|]$ is a constant, the
probability in this last term can be rewritten as
\begin{equation*}
\begin{split}
\Pr_{\tl S_a} \bigg[ |\Phi(\tl S_a)|  + \frac{\mu b M}{m} + 2 \mu b \sbeta] \geq \e \bigg]
= & \Pr_{\tl S_a} \bigg[ |\Phi(\tl S_a)| - \E_{\tl S_a'}[|\Phi(\tl S_a')|]
+ \frac{\mu b M}{m} + 2 \mu b \sbeta] \geq \e - \E_{\tl S_a'}[|\Phi(\tl S_a')|] \bigg]\\
= & \Pr_{\tl S_a} \bigg[|\Phi(\tl S_a)| - \E_{\tl S_a'}[|\Phi(\tl S_a')|] \geq \e_0' \bigg],
\end{split}
\end{equation*}
which ends the proof of the lemma.
\end{proof}
The last two lemmas will help us prove the main result of this section formulated
in the following theorem.
\begin{theorem}
  Assume a $\sbeta$-stable algorithm and let $\e'$ denote $\e -
  \frac{\mu b M}{m} - 2 \mu b \sbeta - a \sbeta$ as in
  Lemma~\ref{lem:prep}. Then, for any sample $S$ of size $m$ drawn
  according to a stationary $\beta$-mixing distribution, any choice of
  the parameters $a, b, \mu > 0$ such that $(a + b)
  \mu = m$, and $\e \geq 0$ such that $\e' \geq 0$, the following
  generalization bound holds:
\begin{equation*}
\Pr_S \Big[ |R(h_S) - \h R(h_S)| \geq \e \Big]
\leq \exp \left( \frac{-2 \e'^2 m}{\big(2 a\sbeta m + (a + b) M
\big)^2}\right) + (\mu - 1) \beta(b).
\end{equation*}
\end{theorem}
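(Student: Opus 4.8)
The plan is to prove the theorem by chaining the three ingredients already in place: the reduction to independent blocks of Lemma~\ref{lem:prep}, the expectation bound of Lemma~\ref{lem:exp_bound}, and a one-sided version of McDiarmid's inequality (Theorem~\ref{thm:mcdiarmid}) applied to the i.i.d.\ block sequence $\tl S_a$. First I would invoke Lemma~\ref{lem:prep} to pass from the dependent event to an independent-block event, paying the additive price $(\mu - 1)\beta(b)$:
\[
\Pr_S[|\Phi(S)| \geq \e] \leq \Pr_{\tl S_a}\big[|\Phi(\tl S_a)| - \E[|\Phi(\tl S_a)|] \geq \e_0'\big] + (\mu - 1)\beta(b),
\]
with $\e_0' = \e - \frac{\mu b M}{m} - 2\mu b \sbeta - \E_{\tl S_a'}[|\Phi(\tl S_a')|]$. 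Since Lemma~\ref{lem:exp_bound} gives $\E_{\tl S_a'}[|\Phi(\tl S_a')|] \leq a\sbeta$, we have $\e_0' \geq \e'$, so by monotonicity of the tail in the threshold the probability on the right is at most $\Pr_{\tl S_a}[|\Phi(\tl S_a)| - \E[|\Phi(\tl S_a)|] \geq \e']$. It then remains only to bound this centered tail over the $\mu$ independent blocks.

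The main technical step, and the one I expect to be the crux, is establishing the bounded-difference (Lipschitz) constants of $|\Phi(\tl S_a)|$ viewed as a function of the $\mu$ i.i.d.\ blocks $\tl Z^{(a)}_1, \dots, \tl Z^{(a)}_\mu$. Fix a block index $j$ and let $\tl S_a^j$ denote $\tl S_a$ with its $j$th block replaced by an independent block $Z'$ of the same size $a$. Writing $\Phi(\tl S_a) = \frac{1}{\mu}\sum_{i=1}^\mu c(h_{\tl S_a}, \tl Z^{(a)}_i) - \E_Z[c(h_{\tl S_a}, Z)]$, I would bound the two pieces separately. Because $\tl S_a$ and $\tl S_a^j$ differ by at most $a$ points, iterating the single-point $\sbeta$-stability $a$ times gives $|c(h_{\tl S_a}, z) - c(h_{\tl S_a^j}, z)| \leq a\sbeta$ for every fixed $z$; hence the true-error term changes by at most $a\sbeta$, and among the empirical summands each of the $\mu - 1$ unchanged blocks contributes at most $a\sbeta/\mu$. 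The single directly replaced summand is bounded crudely by $M/\mu$ since $c$ is bounded by $M$. Collecting these yields $|\Phi(\tl S_a) - \Phi(\tl S_a^j)| \leq 2a\sbeta + M/\mu$, and the same constant bounds the differences of $|\Phi(\tl S_a)|$ since $||x| - |y|| \leq |x - y|$.

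Finally I would apply the one-sided form of Theorem~\ref{thm:mcdiarmid} with the $\mu$ blocks as the independent coordinates and constants $c_i = 2a\sbeta + M/\mu$, giving $\sum_{i=1}^\mu c_i^2 = \mu(2a\sbeta + M/\mu)^2 = (2a\sbeta m + (a+b)M)^2 / (m(a+b))$ after substituting $\mu = m/(a+b)$. This produces
\[
\Pr_{\tl S_a}\big[|\Phi(\tl S_a)| - \E[|\Phi(\tl S_a)|] \geq \e'\big] \leq \exp\!\left(\frac{-2\e'^2 m(a+b)}{\big(2a\sbeta m + (a+b)M\big)^2}\right),
\]
and since $a + b \geq 1$ the factor $(a+b)$ in the numerator may be dropped to obtain the (looser) exponent stated in the theorem. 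Adding back the $(\mu-1)\beta(b)$ term and recalling $\Pr_S[|R(h_S) - \h R(h_S)| \geq \e] = \Pr_S[|\Phi(S)| \geq \e]$ completes the argument. The only delicate points are the correct direction of the threshold substitution $\e_0' \geq \e'$ and the careful separation, in the bounded-difference estimate, of the directly replaced block (bounded by $M/\mu$) from the indirectly perturbed ones (bounded via iterated stability).
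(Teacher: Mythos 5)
Your proposal is correct and follows essentially the same route as the paper: reduce to the independent block sequence via Lemma~\ref{lem:prep}, use Lemma~\ref{lem:exp_bound} to replace $\e_0'$ by $\e'$, bound the per-block variation of $\Phi(\tl S_a)$ by $2a\sbeta + M/\mu$ (splitting empirical and true error exactly as the paper does), and apply the one-sided McDiarmid inequality over the $\mu$ blocks. Your exponent even retains the extra factor $(a+b)$ that the paper's computation also yields, and you correctly observe that dropping it (since $a+b \geq 1$) recovers the slightly looser form stated in the theorem.
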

\begin{proof}
To prove the statement of theorem, it suffices to bound the probability
term appearing in the right-hand side of Equation~\ref{eq:lemma_prep},
$\Pr_{\tl S_a} \big[|\Phi(\tl S_a)| - \E[|\Phi(\tl S_a)]| \geq \e_0'\big]$,
which is expressed only in terms of independent blocks. We can therefore
apply McDiarmid's inequality by viewing the blocks as i.i.d.\ ``points''.

To do so, we must bound the quantity $\big||\Phi(\tl S_a)| - |\Phi(\tl
S_a^i) |\big|$ where the sequence $S_a$ and $S_a^{i}$ differ in the $i$th
block.  We will bound separately the difference between the generalization
errors and empirical errors.\footnote{We drop the superscripts on $Z^{(a)}$
since we will not be considering the sequence $S_b$ in what follows.} The
difference in empirical errors can be bounded as follows using the bound on
the cost function $c$:
\begin{eqnarray*}
	|\h R(h_{S_a}) - \h R(h_{S_a^{i}})| 
	&=& \bigg| \frac{1}{\mu} \bigg[ \sum_{j \neq i} c(h_{S_a}, Z_j) -
	c(h_{S_a^{i}}, Z_j) \bigg] + \frac{1}{\mu} \big[ c(h_{S_a}, Z_i) -
	c(h_{S_a^{i}}, Z_i') \big] \bigg| \\
	& \leq & a \sbeta + \frac{M}{\mu}  = a \sbeta + \frac{(a + b) M}{m}.
\end{eqnarray*}
The difference in generalization error can be straightforwardly
bounded using $\sbeta$-stability:
\begin{eqnarray*}
	|R(h_{S_a}) - R(h_{S_a^{i}})| 
	= | \E_Z [ c(h_{S_a},Z) ] - \E_Z [ c(h_{S_a^{i}},Z) ] | 
	= | \E_Z [ c(h_{S_a},Z) -  c(h_{S_a^{i}},Z) ] | \leq a \sbeta.
\end{eqnarray*}
Using these bounds in conjunction with McDiarmid's inequality yields
\begin{align*}
\Pr_{\tl S_a}[ |\Phi(\tl S_a)| - \E_{\tl S_a'} [|\Phi(\tl S_a')|] \geq \e_0']
& \leq \exp \left( \frac{-2 \e_0'^2 m}{\big(2 a\sbeta m + (a + b) M
\big)^2} \right) \\
& \leq \exp \left( \frac{-2 \e'^2 m}{\big(2 a\sbeta m + (a + b) M \big)^2} \right).
\end{align*}
Note that to show the second inequality we make use of
Lemma~\ref{lem:exp_bound} to estabilish the fact that
\begin{equation*}
	\e_0' 
	=  \e - \frac{\mu b M}{m} - 2 \mu b \sbeta - \E_{\tl S_a'}[|\Phi(\tl S_a')|]
	\geq \e - \frac{\mu b M}{m} - 2 \mu b \sbeta - \alpha \sbeta
	= \e'.
\end{equation*}
Finally, we make use of Lemma~\ref{lem:prep} to establish the proof,
\begin{align*}
\Pr_S[|\Phi(S)| \geq \e] 
&\leq \Pr_{\tl S_a} \big[|\Phi(\tl S_a)| -
\E[|\Phi(\tl S_a)|] \geq \e_0'\big] + (\mu - 1)\beta(b) \\
& \leq  \exp \left( \frac{-2 \e'^2 m}{\big(2 a\sbeta m + (a + b) M \big)^2}
\right) + (\mu - 1) \beta(b).
\end{align*}
\end{proof}

In order to make use of the bounds, we must select the values of
parameters $b$ and $\mu$ ($a$ is then equal to $\mu / m - u$). There
is a trade-off between choosing large value for $b$, to ensure the
mixing term decreases, while choosing a large value of $\mu$, to
minimize the remaining terms of the bound. The exact choice of
parameters will depend on the type of mixing that is assumed (e.g.
algebraic or exponential).  In order to choose optimal parameters, it
will be useful to view the bound as it holds with high probability, in
the following corollary.
\begin{corollary}
\label{cor:whp}
Assume a $\sbeta$-stable algorithm and let $\delta'$ denote $\delta - (\mu
- 1) \beta(b)$. Then, for any sample $S$ of size $m$ drawn according to a
stationary $\beta$-mixing distribution, any choice of the parameters $a, b,
\mu > 0$ such that $(a + b) \mu = m$, and $\delta \geq 0$ such that
$\delta' \geq 0$, the following generalization bound holds with probability
at least $(1 - \delta)$:
\begin{equation*}
	|R(h_S) - \h R(h_S)|  < \sqrt{\frac{\log \left(1 / \delta') \right)}{2m}}
	\left( 2 a \sbeta m + M \frac{m}{\mu} \right)
	+ \mu b \left( \frac{M}{m} + 2 \sbeta \right) + a \sbeta
\end{equation*}
\end{corollary}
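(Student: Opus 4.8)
The plan is to treat this corollary purely as a high-probability inversion of the preceding theorem, so essentially all of the work is algebraic bookkeeping. First I would start from the theorem's conclusion $\Pr_S[|R(h_S) - \h R(h_S)| \geq \e] \leq \exp\bigl(-2\e'^2 m / (2a\sbeta m + (a+b)M)^2\bigr) + (\mu-1)\beta(b)$ and require the entire right-hand side to be at most $\delta$. Since $(\mu-1)\beta(b)$ is a fixed additive contribution independent of $\e$, this reduces to demanding that the exponential term be at most $\delta' = \delta - (\mu-1)\beta(b)$. The hypothesis $\delta' \geq 0$, together with the fact that $\delta' \leq \delta \leq 1$ (as $\delta$ bounds a probability and $(\mu-1)\beta(b)\geq 0$), guarantees $\delta' \in [0,1]$ so that $\log(1/\delta')$ is well-defined and nonnegative.

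Next I would invert the exponential. Setting $\exp\bigl(-2\e'^2 m/(2a\sbeta m + (a+b)M)^2\bigr) = \delta'$ and taking logarithms gives $\e'^2 = \tfrac{\log(1/\delta')}{2m}\,(2a\sbeta m + (a+b)M)^2$, and hence $\e' = \sqrt{\tfrac{\log(1/\delta')}{2m}}\,(2a\sbeta m + (a+b)M)$, where the nonnegative root is the correct one precisely because $\log(1/\delta')\geq 0$ forces $\e'\geq 0$, matching the theorem's standing hypothesis. At this point I would invoke the partitioning constraint $(a+b)\mu = m$, which yields $a + b = m/\mu$ and therefore $(a+b)M = M\,m/\mu$, converting the multiplicative factor into the exact form $2a\sbeta m + M\,m/\mu$ that appears in the corollary.

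Finally I would recover $\e$ from $\e'$. Using the definition $\e' = \e - \tfrac{\mu b M}{m} - 2\mu b\sbeta - a\sbeta$ employed in Lemma~\ref{lem:prep} and the theorem (where the term $\E_{\tl S_a'}[|\Phi(\tl S_a')|]$ was replaced by $a\sbeta$ via Lemma~\ref{lem:exp_bound}), I have $\e = \e' + \tfrac{\mu b M}{m} + 2\mu b\sbeta + a\sbeta$. Collecting the two $\mu b$ terms as $\mu b\bigl(\tfrac{M}{m} + 2\sbeta\bigr)$ reproduces exactly the claimed expression. Since the theorem then reads $\Pr_S[|R(h_S) - \h R(h_S)| \geq \e] \leq \delta$ for this choice of $\e$, the complementary event $|R(h_S) - \h R(h_S)| < \e$ holds with probability at least $1-\delta$, which is the statement of the corollary.

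I do not expect a genuine obstacle: the substantive content is already carried by the preceding theorem and by Lemma~\ref{lem:exp_bound}, and this corollary merely repackages that result into a form convenient for subsequently optimizing over $a$, $b$, and $\mu$. The only points deserving care are the verification that $\delta'\geq 0$ (and $\delta'\leq 1$) so the logarithm and square root are legitimate, and the consistent use of the substitution $a + b = m/\mu$ so that the $M\,m/\mu$ factor emerges correctly.
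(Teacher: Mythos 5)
Your derivation is correct and is exactly the intended (and only natural) route: the paper states this corollary without proof, leaving implicit precisely the inversion you carry out — setting the exponential term of the preceding theorem equal to $\delta' = \delta - (\mu-1)\beta(b)$, solving for $\e'$, substituting $a+b = m/\mu$, and unwinding $\e = \e' + \mu b(M/m + 2\sbeta) + a\sbeta$. Your added care about $\delta' \in [0,1]$ and the sign of $\e'$ is sound and matches the theorem's standing hypotheses.
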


In the case of a fast mixing distribution, it is possible to select
the values of the parameters to retrieve a bound as in the i.i.d.\
case, i.e.  $|R(h_S) - \h R(h_S)| \in O \Big( m^{-\frac{1}{2}}
\sqrt{\log 1/\delta} \Big)$. In particular, for $\beta(b) \equiv 0$,
we can choose $a=0$, $b=1$ and $\mu=m$ to retrieve the i.i.d.\ bound
of \citet{bousquet}.

In the following, we will examine slower mixing algebraic
$\beta$-mixing distributions, which are thus not close to the i.i.d.\
scenario.  For algebraic mixing the mixing parameter is defined as
$\beta(b) = b^{-r}$. In that case, we wish to minimize the following
function in terms of $\mu$ and $b$.
\begin{equation}
\label{eq:shape}
s(\mu, b) = \frac{\mu}{b^r} 
		+ \frac{m^{3/2} \sbeta}{\mu}
		+ \frac{m^{1/2}}{\mu} 
		+ \mu b \left( \frac{1}{m} + \sbeta \right) .
\end{equation}
The first term of the function captures the condition on $\delta > (\mu +
1) \beta(b) \approx \mu / b^r$ and the remaining terms capture the shape of
the bound in Corollary~\ref{cor:whp}.

Setting the derivative with respect to each variable $\mu$ and $b$ to zero
and solving for each parameter results in the following expressions:
\begin{equation}
	b = C_r \sbetam^{-\frac{1}{r + 1}} ,
	\qquad
	\mu =	\frac{m^{3/4} \sbetam^\frac{1}{2(r+1)}} {\sqrt{C_r (1 + 1/r)}},
\end{equation}
where $\sbetam = (m^{-1} + \sbeta)$ and $C_r =
r^\frac{1}{r+1}$ is a constant defined by the parameter $r$.  

\ignore{
If we try to retrieve the i.i.d.\ case by letting $r \to \infty$ we see
that $b=1$ and $\mu=m^{3/4}$ (it can be verified that $C_r \to 1$), which
does not quite match our expectation (it should be that $\mu=m$).  This is
an artifact of having to choose $\mu$ and $b$ such that both $(\mu b) / m$
and $\beta(b)$ tend to $0$ as $m \to \infty$.  However, in the i.i.d.\
case, where $\beta(b) \equiv 0$, we are free to choose $b=0, a=1, \mu=m$,
where again we are able to retrieve the i.i.d.\ bound of
\citet{bousquet-jmlr}.
}

Now, assuming $\sbeta \in O(m^{-\alpha})$ for some $0 < \alpha \leq 1$, we
analyze the convergence behavior of Corollary~\ref{cor:whp}.  First, we
notice that the terms $b$ and $\mu$ have the following asymptotic behavior,
\begin{equation}
	b	\in O \left( m^\frac{\alpha}{r+1} \right) ,
	\qquad
	\mu \in O \left( m^{\frac{3}{4} - \frac{\alpha}{2(r + 1)}} \right) .
\end{equation}
Next, we consider the condition $\delta' > 0$ which is equivalent to,
\begin{equation}
\label{eq:delta_constraint}
	\delta > (\mu - 1) \beta(b) 
	\in O \bigg( m^{\frac{3}{4} - \alpha \big( 1 - \frac{1}{2(r+1)}
	\big)}\bigg).
\end{equation}
In order for the right-hand side of the inequality to converge, it must be
the case that $\alpha > \frac{3r + 3}{4r + 2}$. In particular, if
$\alpha=1$, as we have shown is the case for several algorithms in
Section~\ref{sec:apps}, then it suffices that $r > 1$.

Finally, in order to see how the bound itself converges, we study the
asymptotic behavior of the terms of Equation~\ref{eq:shape} (without the
first term, which corresponds to the quantity already analyzed in
Equation~\ref{eq:delta_constraint}):
\begin{equation}
		\underbrace{
		\frac{m^{3/2} \sbeta}{\mu}
		+ \mu b \sbeta
		}_{(a)}
		\underbrace{
		+ \frac{m^{1/2}}{\mu} 
		+ \frac{\mu b}{m} 
		}_{(b)}
		\in O \underbrace{ \bigg(
		m^{\frac{3}{4} - \alpha \big( 1 -  \frac{1}{2(r+1)} \big)}
		}_{(a)}
		\underbrace{
		+ ~m^{\frac{\alpha}{2(r + 1)} - \frac{1}{4}}
		\bigg)}_{(b)}.
\end{equation}
This expression can be further simplified by noticing that $(b) \leq (a)$
for all $0 < \alpha \leq 1$ (with equality at $\alpha=1$).  Thus, both the
bound and the condition on $\delta$ decrease asymptotically as the term in 
$(a)$, resulting in the following corollary.

\begin{corollary}
Assume a $\sbeta$-stable algorithm with $\sbeta \in O(m^{-1})$ and let $\delta' = \delta - m^{\frac{1}{2(r + 1)} - \frac{1}{4}}$. Then, for any sample $S$ of
size $m$ drawn according to a stationary algebraic $\beta$-mixing
distribution, and $\delta \geq 0$ such that $\delta' \geq 0$, the following
generalization bound holds with probability at least $(1 - \delta)$:
\begin{equation}
	|R(h_S) - \h R(h_S)|  < O \bigg(
			m^{\frac{1}{2(r + 1)} - \frac{1}{4}}
			\sqrt{\log(1/\delta')}
		\bigg).
\end{equation}
\end{corollary}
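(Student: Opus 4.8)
The plan is to specialize Corollary~\ref{cor:whp} to the regime $\alpha = 1$ (that is, $\sbeta \in O(m^{-1})$) by substituting the optimal parameter values $b$ and $\mu$ already exhibited for the algebraic $\beta$-mixing case, and then simply to track the order of every term appearing in the high-probability bound. Almost all of the groundwork is done in the discussion preceding the statement: the orders of $b$ and $\mu$, the size of the mixing penalty $(\mu-1)\beta(b)$, and the decomposition of the remaining terms into the groups $(a)$ and $(b)$ are all established there. The residual work is therefore essentially bookkeeping, assembling these estimates and confirming that the dominant contribution carries both the claimed exponent $\tfrac{1}{2(r+1)} - \tfrac14$ and the factor $\sqrt{\log(1/\delta')}$.

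First I would record that, at $\alpha = 1$, the optimal parameters obey $b \in O(m^{1/(r+1)})$ and $\mu \in O(m^{3/4 - 1/(2(r+1))})$, so that $m/\mu \in O(m^{1/4 + 1/(2(r+1))})$. The one nontrivial point is the comparison of $a$ and $b$: since $a = m/\mu - b$, I must determine which summand dominates. For $r > 1$ the exponent $\tfrac14 + \tfrac{1}{2(r+1)}$ of $m/\mu$ strictly exceeds the exponent $\tfrac1{r+1}$ of $b$ (this reduces to $2(r+1) > 4$), whence $a$ scales like $m/\mu$, i.e.\ $a \in O(m^{1/4 + 1/(2(r+1))})$. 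This is the step I expect to require the most care, since all subsequent term-by-term estimates hinge on $a$ behaving like $m/\mu$ rather than like $b$, and it is precisely here that the hypothesis $r > 1$ (needed for convergence) enters.

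Next I would substitute these orders into the three groups of terms in Corollary~\ref{cor:whp}. For the first group $\sqrt{\log(1/\delta')/(2m)}\,(2a\sbeta m + M m/\mu)$, both $a\sbeta m$ and $m/\mu$ are $O(m^{1/4 + 1/(2(r+1))})$, so after the $m^{-1/2}$ factor this group is $O\!\big(m^{1/(2(r+1)) - 1/4}\sqrt{\log(1/\delta')}\big)$. For the second group $\mu b (M/m + 2\sbeta)$, I would use $\mu b \in O(m^{3/4 + 1/(2(r+1))})$ and $M/m + 2\sbeta \in O(m^{-1})$ to get $O(m^{1/(2(r+1)) - 1/4})$. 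The last term $a\sbeta$ is $O(m^{1/(2(r+1)) - 3/4})$, smaller by a factor $m^{-1/2}$ and hence negligible. Each of these is the $\alpha = 1$ instance of the group-$(a)$/group-$(b)$ analysis carried out above, where the two groups coincide exactly at $\alpha = 1$; indeed all four of the relevant summands evaluate to order $m^{1/(2(r+1)) - 1/4}$ there.

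Finally I would combine the estimates: the log-carrying first group dominates, the remaining terms being of the same order or smaller and lacking the $\sqrt{\log(1/\delta')}$ factor, which yields the overall bound $O\!\big(m^{1/(2(r+1)) - 1/4}\sqrt{\log(1/\delta')}\big)$. It then remains only to check that the choice of $\delta'$ is admissible for Corollary~\ref{cor:whp}: the mixing penalty satisfies $(\mu - 1)\beta(b) \in O(m^{1/(2(r+1)) - 1/4})$, which is exactly the quantity subtracted in the definition $\delta' = \delta - m^{1/(2(r+1)) - 1/4}$, so the hypothesis $\delta' \ge 0$ guarantees the requirements of Corollary~\ref{cor:whp}. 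Absorbing the remaining constants into the $O(\cdot)$ completes the argument.
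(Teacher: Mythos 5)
Your proposal is correct and follows essentially the same route as the paper: the corollary is obtained by specializing the discussion preceding it (the optimal choices of $b$ and $\mu$, the constraint $\delta' \geq 0$, and the grouping of the bound's terms) to $\alpha = 1$ and reading off the dominant order $m^{\frac{1}{2(r+1)} - \frac{1}{4}}$. Your explicit check that $a$ scales like $m/\mu$ rather than like $b$ when $r > 1$ is a small but welcome addition that the paper leaves implicit in writing the term $m^{3/2}\sbeta/\mu$ in Equation~\ref{eq:shape}.
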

As in previous bounds $r>1$ is required for convergence. Furthermore,
as expected, a larger mixing parameter $r$ leads to a more favorable
bound.

\section{Conclusion}
\label{sec:conclusion}

We presented stability bounds for both $\varphi$-mixing and
$\beta$-mixing stationary sequences. Our bounds apply to large classes
of algorithms, including common algorithms such as SVR, KRR, and SVMs,
and extend to non-i.i.d.\ scenarios existing i.i.d.\ stability
bounds. Since they are algorithm-specific, these bounds can often be
tighter than other generalization bounds based on general complexity
measures for families of hypotheses. As in the i.i.d.\ case, weaker
notions of stability might help further improve and refine these
bounds.

Our bounds can be used to analyze the properties of stable
algorithms when used in the non-i.i.d\ settings studied. But, more
importantly, they can serve as a tool for the design of novel and
accurate learning algorithms. Of course, some mixing properties of the
distributions need to be known to take advantage of the information
supplied by our generalization bounds.  In some problems, it is
possible to estimate the shape of the mixing coefficients. This should
help devising such algorithms.

\acks{This work was partially funded by the New York State Office of Science
Technology and Academic Research (NYSTAR) and a Google Research Award.}

\vskip 0.2in
\bibliography{niidj}
\end{document}